\titlespacing*{\section}{0pt}{0pt}{0pt}
\titlespacing*{\subsection}{0pt}{0pt}{0pt}
\definecolor{maroon}{cmyk}{0,0.87,0.68,0.32}
\definecolor{LightCyan}{rgb}{0.88,1,1}
\newcolumntype{a}{>{\columncolor{LightCyan}}c}
\newcommand{\ourmethod}{\textsc{PDE}}
\title{Robust Learning with Progressive Data Expansion Against Spurious Correlation}
\author
{
    Yihe Deng\thanks{Equal contribution}
    ~~~
    Yu Yang\footnotemark[1]
    ~~~
    Baharan Mirzasoleiman
    ~~~
    Quanquan Gu\\
    Department of Computer Science\\ University of California, Los Angeles\\
    Los Angeles, CA 90095 \\
    \texttt{\{yihedeng,yuyang,baharan,qgu\}@cs.ucla.edu} 
}
\begin{document}

\maketitle

\begin{abstract}
While deep learning models have shown remarkable performance in various tasks, they are susceptible to learning non-generalizable \textit{spurious features} rather than the core features that are genuinely correlated to the true label.
In this paper, beyond existing analyses of linear models, we theoretically examine the learning process of a two-layer nonlinear convolutional neural network in the presence of spurious features. 
Our analysis suggests that imbalanced data groups and easily learnable spurious features can lead to the dominance of spurious features during the learning process.
In light of this, we propose a new training algorithm called \textbf{\ourmethod} that efficiently enhances the model's robustness for a better worst-group performance.
{\ourmethod} begins with a group-balanced subset of training data and progressively expands it to facilitate the learning of the core features. 
Experiments on synthetic and real-world benchmark datasets confirm the superior performance of our method on models such as ResNets and Transformers. 
On average, our method achieves a $2.8\%$ improvement in worst-group accuracy compared with the state-of-the-art method, while enjoying up to $10\times$ faster training efficiency. Codes are available at \url{https://github.com/uclaml/PDE}.
\end{abstract}

\section{Introduction}
\begin{figure}[!ht]
\centering
\includegraphics[width=0.9\textwidth]{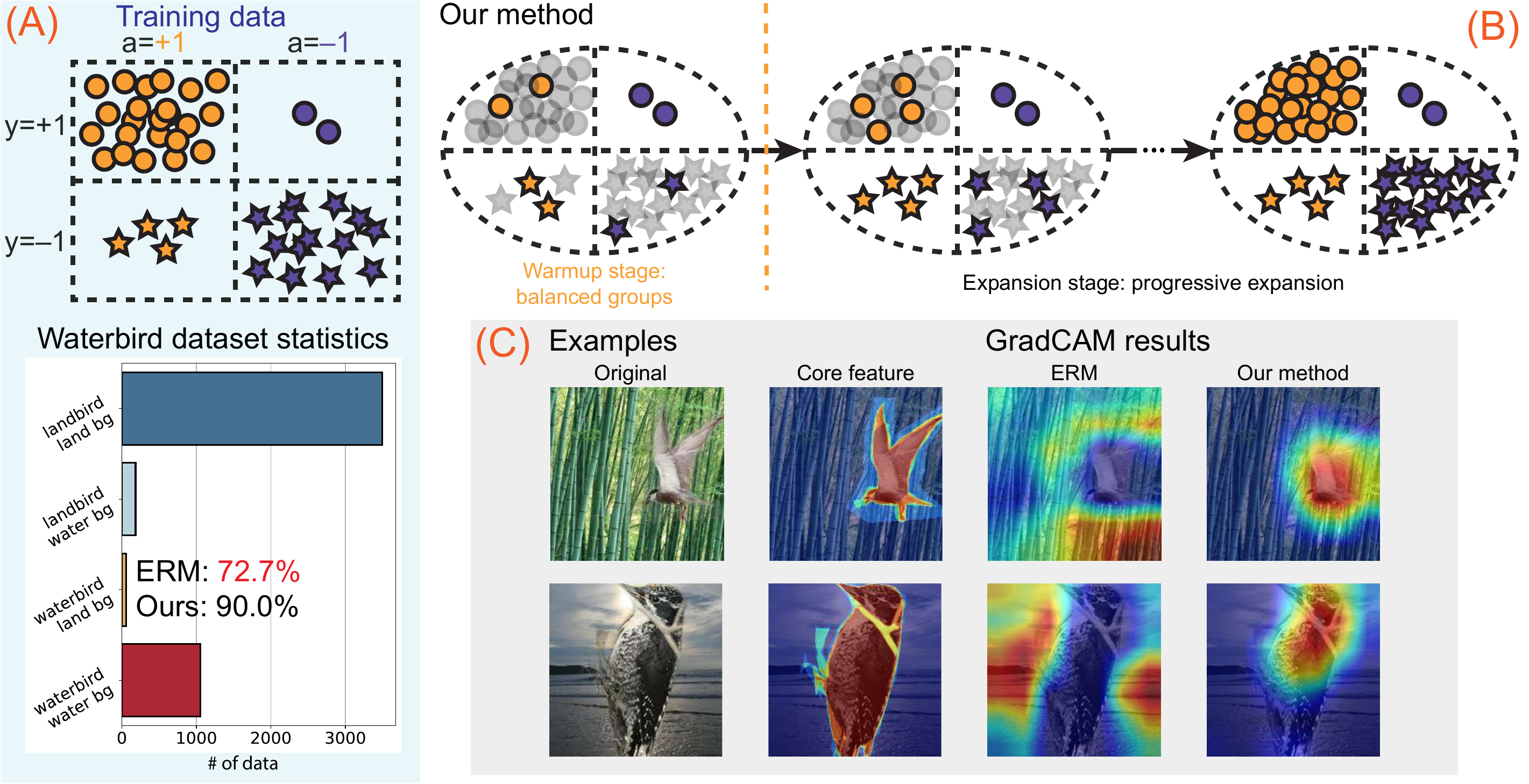}
\caption{An overview of the problem, our proposed solution, and the resultant outcomes. (A) We demonstrate the data distribution and provide an example of the statistics of Waterbirds. (B) The overall procedure of PDE. (C) We use GradCAM~\citep{selvaraju2017grad} to show the attention of the model trained with {\ourmethod} as compared to ERM. While ERM focuses on the background, PDE successfully trains the model to capture the birds.}\vspace{-1cm}
\label{fig:demo}
\end{figure}
Despite the remarkable performance of deep learning models, recent studies~~\citep{sagawa2019distributionally,sagawa2020investigation,izmailov2022feature,haghtalab2022ondemand,yang2022understanding,yang2023mitigating,yang2023change,joshi2023towards} have identified their vulnerability to spurious correlations in data distributions. A spurious correlation refers to an easily learned feature that, while unrelated to the task at hand, appears with high frequency within a specific class. 
For instance, waterbirds frequently appear with water backgrounds, and landbirds with land backgrounds. 
When training with empirical risk minimization (ERM), deep learning models tend to exploit such correlations and fail to learn the more subtle features genuinely correlated with the true labels, resulting in poor generalization performance on minority data (e.g., waterbirds with land backgrounds as shown in Figure~\ref{fig:demo}).
This observation raises a crucial question: 
\textit{Does the model genuinely learn to classify birds, or does it merely learn to distinguish land from water?} 
The issue is particularly concerning because deep learning models are being deployed in critical applications such as healthcare, finance, and autonomous vehicles, where we require a reliable predictor.

Researchers formalized the problem by considering examples with various combinations of core features (e.g., landbird/waterbird) and spurious features (e.g., land/water backgrounds) as different \textit{groups}. 
The model is more likely to make mistakes on certain groups if it learns the spurious feature. The objective therefore becomes balancing and improving performance across all groups. 
Under this formulation, we can divide the task into two sub-problems: (1) accurately identifying the groups, which are not always known in a dataset, and (2) effectively using the group information to finally improve the model's robustness. 
While numerous recent works~\citep{nam2020learning, liu2021just, creager2021environment, ahmed2021systematic,taghanaki2021robust,zhang2022correct} focus on the first sub-problem, the second sub-problem remains understudied. The pioneering work~\citep{sagawa2019distributionally} still serves as the best guidance for utilizing accurate group information. In this paper, we focus on the second sub-problem and aim to provide a more effective and efficient algorithm to utilize the group information. It is worth noting that the theoretical understanding of spurious correlations lags behind the empirical advancements in mitigating spurious features. Existing theoretical studies~\citep{sagawa2020investigation, chen2020self, yang2022understanding,ye2022freeze} are limited to the setting of simple linear models and data distribution that are less reflective of real application scenarios. 

We begin by theoretically examining the learning process of spurious features when training a two-layer nonlinear convolutional neural network (CNN) on a corresponding data model that captures the influence of spurious correlations. 
We illustrate that the learning of spurious features swiftly overshadows the learning of core features from the onset of training when groups are imbalanced and spurious features are more easily learned than core features. 
Based upon our theoretical understanding, we propose \underline{Progressive Data Expansion} (\textbf{\ourmethod}), a neat and novel training algorithm that efficiently uses group information to enhance the model's robustness against spurious correlations. 
Existing approaches, such as GroupDRO~\citep{sagawa2019distributionally} and upsampling techniques~\citep{liu2021just}, aim to balance the data groups in each batch throughout the training process.
In contrast, we employ a small balanced warm-up subset only at the beginning of the training. 
Following a brief period of balanced training,  
we progressively expand the warm-up subset by adding small random subsets of the remaining training data until using all of them, as shown in the top right of Figure~\ref{fig:demo}. 
Here, we utilize the momentum from the warm-up subset to prevent the model from learning spurious features when adding new data. 
Empirical evaluations on both synthetic and real-world benchmark data validate our theoretical findings and confirm the effectiveness of {\ourmethod}. 
Additional ablation studies also demonstrate the significance and impact of each component within our training scheme. 
In summary, our contributions are highlighted as follows:
\begin{itemize}[leftmargin=*,nosep]
\item We provide a theoretical understanding of the impact of spurious correlations beyond the linear setting by considering a two-layer nonlinear CNN. 
\item We introduce {\ourmethod}, a theory-inspired approach that effectively addresses the challenge posed by spurious correlations. 
\begin{itemize}[leftmargin=*,nosep]
    \item {\ourmethod} achieves the best performance on benchmark vision and language datasets for models including ResNets and Transformers. On average, it outperforms the state-of-the-art method by $2.8\%$ in terms of worst-group accuracy.
    \item {\ourmethod} enjoys superior training efficiency, being $10\times$ faster than the state-of-the-art methods.
\end{itemize}
\end{itemize}

\section{Why is Spurious Correlation Harmful to ERM?}
In this section, we simplify the intricate real-world problem of spurious correlations into a theoretical framework. We provide analysis on two-layer nonlinear CNNs, extending beyond the linear setting prevalent in existing literature on this subject. 
Under this framework, we formally present our theory concerning the training process of empirical risk minimization (ERM) in the presence of spurious features. 
These theoretical insights motivate the design of our algorithm. 
\subsection{Empirical Risk Minimization}
We begin with the formal definition of the ERM-based training objective for a binary classification problem. Consider a training dataset $S = \{(\xb_i,y_i)\}_{i=1}^{N}$, where $\xb_i\in\RR^d$ is the input and $y\in\{\pm 1\}$ is the output label. We train a model $f(\xb;\Wb)$ with weight $\Wb$ to minimize the empirical loss function:
\begin{align}
   \cL(\Wb) = \frac{1}{N} \textstyle{\sum_{i=1}^N}\ell\big(y_{i}f(\xb_{i};\Wb)\big),\label{eq:empirical loss}
\end{align}
where $\ell$ is the logistic loss defined as $\ell(z)= \log(1+\exp(-z))$. The empirical risk minimizer refers to $\Wb^*$ that minimizes the empirical loss: $\Wb^*\coloneqq\argmin_\Wb\cL(\Wb)$. 
Typically, gradient-based optimization algorithms are employed for ERM. For example, at each iteration $t$, gradient descent (GD) has the following update rule: 
\begin{align}
    \Wb^{(t+1)}&=\Wb^{(t)}-\eta\nabla\cL(\Wb^{(t)}). \label{eq:GD update}
\end{align}
Here, $\eta>0$ is the learning rate. 
In the next subsection, we will show that even for a relatively simple data model which consists of core features and spurious features, vanilla ERM will fail to learn the core features that are correlated to the true label. 

\subsection{Data Distribution with Spurious Correlation Fails ERM}
Previous work such as \citep{sagawa2020investigation} considers a data model where the input consists of core feature, spurious feature and noise patches at fixed positions, i.e., $\xb=[\xb_{\text{core}},\xb_{\text{spu}},\xb_{\text{noise}}]$. 
In real-world applications, however, features in an image do not always appear at the same pixels. 
Hence, we consider a more realistic data model where the patches do not appear at fixed positions. 

\begin{definition}[Data model]\label{def:data_distribution}
A data point $(\xb,y,a)\in (\RR^{d})^{P} \times\{\pm 1\}\times\{\pm 1\}$ is generated from the distribution $\cD$ as follows. 
\begin{itemize}[leftmargin=*,nosep] 
    \item Randomly generate the true label $y\in\{\pm 1\}$.
    \item Generate spurious label $a\in\{\pm y\}$, where $a=y$ with probability $\alpha>0.5$. 
    \item Generate $\xb$ as a collection of $P$ patches: $\xb = (\xb^{(1)},\xb^{(2)}, \ldots, \xb^{(P)})\in (\RR^{d})^{P}$, where
    \begin{itemize}[leftmargin=*,nosep]
        \item \textbf{Core feature.} One and only one patch is given by $\beta_c\cdot y\cdot \vb_c$ with $\|\vb_c\|_2=1$.
        \item \textbf{Spurious feature.} 
        One and only one patch is given by $\beta_s\cdot a\cdot \vb_s$ with $\|\vb_s\|_2=1$ and $\inner{\vb_{c}}{\vb_{s}}=0$. 
        \item  \textbf{Random noise.} The rest of the $P-2$ patches are Gaussian noises $\bxi$ that are independently drawn from $N(0, (\sigma_{p}^{2}/d)\cdot\Ib_{d})$ with $\sigma_{p}$ as an absolute constant.
    \end{itemize}
    And $0<\beta_c\ll\beta_s \in \RR$. 
\end{itemize} 
\end{definition}
\begin{wrapfigure}{r}{0.35\textwidth}
\centering
\includegraphics[width=0.35\textwidth]{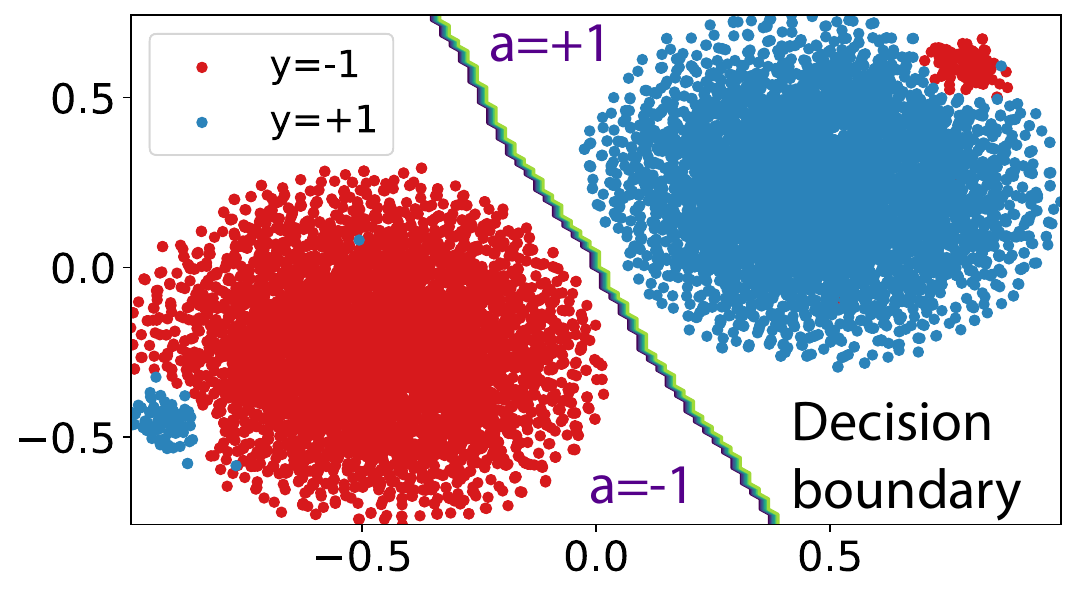}
\caption{\label{fig:data}Visualization of the data.}
\end{wrapfigure}
Similar data models have also been considered in recent works on feature learning~\citep{allen2020towards,zou2021understanding,chen2022towards,jelassi2022towards}, where the input data is partitioned into feature and noise patches. We extend their data models by further positing that certain feature patches might be associated with the spurious label instead of the true label. In the rest of the paper, we assume $P=3$ for simplicity.
With the given data model, we consider the training dataset $S = \{(\xb_i,y_i,a_i)\}_{i=1}^{N}$ and let $S$ be partitioned into large group $S_1$ and small group $S_2$ such that $S_1$ contains all the training data that can be correctly classified by the spurious feature, i.e., $a_i=y_i$, and $S_2$ contains all the training data that can only be correctly classified by the core feature, i.e., $a_i=-y_i$. 
We denote $\hat{\alpha}=\frac{|S_1|}{N}$ and therefore $1-\hat{\alpha}=\frac{|S_2|}{N}$.

\paragraph{Visualization of our data.} 
In Figure~\ref{fig:data}, we present the visualization in $2$D space of the higher-dimensional data generated from our data model using t-SNE~\citep{van2008visualizing}, where data within each class naturally segregate into large and small groups. The spurious feature is sufficient for accurate classification of the larger group data, but will lead to misclassification of the small group data.

\subsection{Beyond Linear Models}
We consider a two-layer nonlinear CNN defined as follows:
\begin{align}
    f(\xb;\Wb) = \textstyle{\sum_{j \in [J]}\sum^{P}_{p=1}} \sigma \big(\langle \wb_{j}, \xb^{(p)} \rangle \big),
    \label{eq:model}
\end{align}
where $\wb_j\in\RR^d$ is the weight vector of the $j$-th filter, $J$ is the number of filters (neurons) of the network, and $\sigma(z)=z^3$ is the activation function. $\Wb=[\wb_{1},\ldots,\wb_{J}]\in\RR^{d\times J}$ denotes the weight matrix of the CNN. 
Similar two-layer CNN architectures are analyzed in in~\citep{chen2022towards,jelassi2022towards} but for different problems, where the cubic activation serves a simple function that provides non-linearity. 
Similar to \citet{jelassi2022towards,cao2022benign}, we assume a mild overparameterization of the CNN with $J=\text{polylog}(d)$.
We initialize $\Wb^{(0)}\sim \cN(0,\sigma_0^2)$, where $\sigma_0^2=\text{polylog}(d)/d$.
Due to the CNN structure, our analysis can handle data models where each data can have an arbitrary order of patches while linear models fail to do so.

\subsection{Understanding the Training Process with Spurious Correlation}
In this subsection, we formally introduce our theoretical result on the training process of the two-layer CNN using gradient descent in the presence of spurious features. We first define the performance metrics. A frequently considered metric is the test accuracy: $\text{Acc}(\Wb) = \PP_{(\xb,y,a)\sim\cD}\big[\text{sgn}(f(\xb;\Wb))=y\big]$.
With spurious correlations, researchers are more interested in the worst-group accuracy: 
\begin{align*}
\begin{small}
    \text{Acc}_{\textrm{wg}}(\Wb) = \min_{y\in\{\pm1\},a\in\{\pm1\}}\PP_{(\xb,y,a)\sim\cD}\big[\text{sgn}(f(\xb;\Wb))=y\big],
\end{small}
\end{align*}
which accesses the worst accuracy of a model among all groups defined by combinations of $y$ and $a$. 
We then summarize the learning process of ERM in the following theorem. Our analysis focuses on the learning of spurious and core features, represented by the growth of $\la\wb_i^{(t)},\vb_s\ra$ and $\la\wb_i^{(t)},\vb_c\ra$ respectively: 
\begin{theorem}\label{lemma:stage1 spurous}
    Consider the training dataset $S=\{(\xb_i,y_i)\}_{i=1}^{N}$ that follows the distribution in Definition~\ref{def:data_distribution}. Consider the two-layer nonlinear CNN model as in~\eqref{eq:model} initialized with $\Wb^{(0)}\sim \cN(0,\sigma_0^2)$. After training with GD in \eqref{eq:GD update} for $T_0=\tilde{\Theta}\big(1/(\eta\beta_s^3\sigma_0)\big)$ iterations, for all $j\in [J]$ and $t\in[0,T_0)$, we have 
    \begin{align}
        \tilde\Theta(\eta)\beta_s^3(2\hat{\alpha} - 1)\cdot\la\wb_j^{(t)},\vb_s\ra^2
        &\le\la\wb_j^{(t+1)},\vb_s\ra- \la\wb_j^{(t)},\vb_s\ra\le \tilde\Theta(\eta)\beta_s^3\hat{\alpha}\cdot\la\wb_j^{(t)},\vb_s\ra^2,\label{eq:spurious bounds}\\
        \tilde\Theta(\eta)\beta_c^3\hat{\alpha} \cdot\la\wb_j^{(t)},\vb_c\ra^2
        &\le\la\wb_j^{(t+1)},\vb_c\ra- \la\wb_j^{(t)},\vb_c\ra\le \tilde\Theta(\eta)\beta_c^3\cdot\la\wb_j^{(t)},\vb_c\ra^2\label{eq:core bounds}.
    \end{align} 
    After training for $T_0$ iterations, with high probability, the learned weight has the following properties: (1) it learns the spurious feature $\vb_s$: $\max_{j\in[J]}\la\wb_j^{(T)},\vb_s\ra \ge \tilde{\Omega}(1/\beta_s)$; (2) it \textit{almost} does not learn the core feature $\vb_c$: $\max_{j\in[J]}\la\wb_j^{(T)},\vb_c\ra = \tilde{\cO}(\sigma_0)$. 
\end{theorem}
\paragraph{Discussion.} The detailed proof is deferred to Appendix~\ref{appendix:analysis}, 
and we provide intuitive explanations of the theorem as follows. 
A larger value of $\langle\wb_i^{(t)},\vb\rangle$ for $\vb\in\{\vb_s,\vb_c\}$ implies better learning of the feature vector $\vb$ by neuron $\wb_i$ at iteration $t$. 
As illustrated in \eqref{eq:spurious bounds} and~\eqref{eq:core bounds}, the updates for both spurious and core features are non-zero, as they depend on the squared terms of themselves with non-zero coefficients, while the growth rate of $\langle\wb_i^{(t)},\vb_s\rangle$ is significantly faster than that of $\langle\wb_i^{(t)},\vb_c\rangle$. 
Consequently, the neural network rapidly learns the spurious feature but barely learns the core feature, as it remains almost unchanged from initialization as compared to the spurious feature.

We derive the neural network's prediction after training for $T_0$ iterations. 
For a randomly generated data example $(\xb, y, a) \sim \mathcal{D}$, the neural network's prediction is given by $\text{sgn}\big(f(\xb;\Wb)\big)=\text{sgn}\big(\sum_{j\in[J]}\big(y\beta_c^3\langle\wb_j,\vb_c\rangle^3+a\beta_s^3\langle\wb_j,\vb_s\rangle^3+\langle\wb_j,\xi\rangle^3\big)\big)$. 
Since the term $\beta_s^3\max_{j\in[J]}\langle\wb_j,\vb_s\rangle^3$ dominates the summation, the prediction will be $\text{sgn}(f(\xb;\Wb))=a$. 
Consequently, we obtain the test accuracy as $\text{Acc}(\Wb)=\alpha$, since $a=y$ with probability $\alpha$, and the model accurately classifies the large group. 
However, when considering the small group and examining examples where $y\ne a$, the models consistently make errors, resulting in $\text{Acc}_{wg}(\Wb)=0$. 
To circumvent this poor performance on worst-group accuracy, an algorithm that can avoid learning the spurious feature is in demand. 

\section{Theory-Inspired Two-Stage Training Algorithm} 
In this section, we introduce Progressive Data Expansion ({\ourmethod}), a novel two-stage training algorithm inspired by our analysis to enhance robustness against spurious correlations. We begin with illustrating the implications of our theory, where we provide insights into the data distributions that lead to the rapid learning of spurious features and clarify scenarios under which the model remains unaffected.
\subsection{Theoretical Implications}\label{sec:implication}
Notably in Theorem~\ref{lemma:stage1 spurous}, the growth of the two sequences $\la\wb_i^{(t)},\vb_s\ra$ in~\eqref{eq:spurious bounds} and $\la\wb_i^{(t)},\vb_c\ra$ in~\eqref{eq:core bounds} follows the formula $x_{t+1}=x_t+\eta Ax_t^2$, where $x_t$ represents the inner product sequence with regard to iteration $t$ and $A$ is the coefficient containing $\hat{\alpha}$, $\beta_c$ or $\beta_s$. 
This formula is closely related to the analysis of tensor power methods~\citep{allen2020towards}. In simple terms, when two sequences have slightly different growth rates, one of them will experience much faster growth in later times. 
As we will show below, the key factors that determine the drastic difference between spurious and core features in later times are the group size $\hat{\alpha}$ and feature strengths $\beta_c,\beta_s$. 
\begin{itemize}[leftmargin=*,nosep]
    \item \noindent\textbf{When the model learns spurious feature ($\beta_c^3 < \beta_s^3(2\hat{\alpha}-1)$).} We examine the lower bound for the growth of $\la\wb_i^{(t)},\vb_s\ra$ in~\eqref{eq:spurious bounds} and the upper bound for the growth of $\la\wb_i^{(t)},\vb_c\ra$ in~\eqref{eq:core bounds} in Theorem~\ref{lemma:stage1 spurous}.
    If $\beta_c^3 < \beta_s^3(2\hat{\alpha}-1)$, we can employ the tensor power method and deduce that the spurious feature will be learned first and rapidly. The condition on data distribution imposes two necessary conditions: $\hat{\alpha}>1/2$ (groups are imbalanced) and $\beta_c<\beta_s$ (the spurious feature is stronger). 
    This observation is consistent with real-world datasets, such as the Waterbirds dataset, where $\hat{\alpha}=0.95$ and the background is much easier to learn than the intricate features of the birds. 
    \item \noindent\textbf{When the model learns core feature ($\beta_c>\beta_s$).}  However, if we deviate from the aforementioned conditions and consider $\beta_c>\beta_s$, we can examine the lower bound for the growth of $\langle\wb_i^{(t)},\vb_c\rangle$ in~\eqref{eq:core bounds} and the upper bound for the growth of $\langle\wb_i^{(t)},\vb_s\rangle$ in~\eqref{eq:spurious bounds}. Once again, we apply the tensor power method and determine that the model will learn the core feature rapidly. In real-world datasets, this scenario corresponds to cases where the core feature is not only significant but also easier to learn than the spurious feature. Even for imbalanced groups with $\hat{\alpha}>1/2$, the model accurately learns the core feature. 
    Consequently, enhancing the coefficients of the growth of the core feature allows the model to tolerate imbalanced groups. We present verification through synthetic experiments in the next section.
\end{itemize}
As we will show in the following subsection, we initially break the conditions of learning the spurious feature by letting $\hat{\alpha}=1/2$ in a group-balanced data subset. Subsequently, we utilize the momentum to amplify the core feature's coefficient, allowing for tolerance of $\hat{\alpha} > 1/2$ when adding new data.

\subsection{{\ourmethod}: A Two-Stage Training Algorithm}
We present a new algorithm named \underline{Progressive Data Expansion} ({\ourmethod}) in Algorithm~\ref{alg:method} and explain the details below, which consist of (1) warm-up and (2) expansion stages. 

\begin{algorithm}\small 
\caption{Progressive Data Expansion ({\ourmethod})}
\begin{algorithmic}[1]\label{alg:method}
\REQUIRE Number of iterations $T_0$ for warm-up training; number of times $K$ for dataset expansion; number of iterations $J$ for expansion training; number of data $m$ for each expansion; learning rate $\eta$; momentum coefficient $\gamma$; initialization scale $\sigma_{0}$; training set $S = \{(\xb_{i},y_{i},a_{i})\}_{i=1}^{n}$; model $f_\Wb$. 
\STATE Initialize $\Wb^{(0)}$.

\textbf{Warm-up stage}
\STATE Divide the $S$ into groups by values of $y$ and $a$: $S_{y,a}=\{(\xb_{i},y_{i},a_{i})\}_{y_i=y,a_i=a}$.
\STATE Generate warm-up set $S^0$ from $S$ by randomly subsampling from each group of $S$ such that $|S^0_{y,a}|=\min_{y',a'}|S_{y',a'}|$ for $y\in\{\pm1\}$ and $a\in\{\pm1\}$.
\FOR{$t=0,1,\ldots, T_0$}
    \STATE Compute loss on $S^0$: $\cL_{S^0}(\Wb^{(t)})=\frac{1}{|S^0|}\sum_{i\in S^0} \ell(y_if(x_i;\Wb^{(t)}))$.
    \STATE Update $\Wb^{(t+1)}$ by \eqref{eq:momentum update} and \eqref{eq:sgd+m update}. 
\ENDFOR

\textbf{Expansion stage}
\FOR{$k=1,\ldots, K$}
    \STATE Draw $m$ examples ($S_{[m]}$) from $S/S^{k-1}$ and let $S^k=S^{k-1}\cup S_{[m]}$. 
    \FOR{$t=1,\ldots, J$}
    \STATE Compute loss on $S^k$: $\cL_{S^k}(\Wb^{(T)})=\frac{1}{|S^k|}\sum_{i\in S^k} \ell(y_if(x_i;\Wb^{(T)}))$, where $T=T_0+(k-1)*J+t$.
    \STATE Update $\Wb^{(T+1)}$ by \eqref{eq:momentum update} and \eqref{eq:sgd+m update}.
    \ENDFOR
\ENDFOR
\STATE \textbf{return} $\Wb^{(t)}=\argmax_{\Wb^{(t')}}\text{Acc}^{\text{val}}_{\textrm{wg}}(\Wb^{(t')})$.
\end{algorithmic}
\end{algorithm}
As accelerated gradient methods are most commonly used in applications, we jointly consider the property of momentum and our theoretical insights when designing the algorithm.
For gradient descent with momentum (GD+M), at each iteration $t$ and with momentum coefficient $\gamma>0$, it updates as follows 
\begin{align}
    \gb^{(t+1)}&=\gamma \gb^{(t)}+(1-\gamma)\nabla\cL(\Wb^{(t)}), \label{eq:momentum update}\\
    \Wb^{(t+1)}&=\Wb^{(t)}-\eta\cdot\gb^{(t+1)},\label{eq:sgd+m update}
\end{align}

\paragraph{Warm-up Stage.} 
In this stage, we create a fully balanced dataset $S^0$, in which each group is randomly subsampled to match the size of the smallest group, and consider it as a warm-up dataset. We train the model on the warm-up dataset for a fixed number of epochs. During this phase, the model is anticipated to accurately learn the core feature without being influenced by the spurious feature. 
Note that, under our data model, a completely balanced dataset will have $\hat{\alpha}=1/2$. We present the following lemma as a theoretical basis for the warm-up stage. 
\begin{lemma}\label{lemma:motivation}
    Given the balanced training dataset $S^0=\{(\xb_i,y_i,a_i)\}_{i=1}^{N_0}$ with $\hat{\alpha}=1/2$ as in Definition~\ref{def:data_distribution} and CNN as in \eqref{eq:model}. The gradient on $\vb_s$ will be $0$ from the beginning of training.  
\end{lemma}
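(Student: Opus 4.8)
The plan is to compute the component of the gradient $\nabla_{\wb_j}\cL_{S^0}$ along the spurious direction $\vb_s$ explicitly and show that the only term capable of driving the growth of $\la\wb_j,\vb_s\ra$ cancels under perfect balance. Writing $f_i=f(\xb_i;\Wb)$ and $\ell_i'=\ell'(y_if_i)$, the chain rule with $\sigma'(z)=3z^2$ gives $\nabla_{\wb_j}\cL_{S^0}=\frac{3}{N_0}\sum_i \ell_i' y_i\sum_{p=1}^{P}\la\wb_j,\xb_i^{(p)}\ra^2\xb_i^{(p)}$. Projecting onto $\vb_s$ and splitting each data point into its core, spurious, and noise patches, the core patch $\beta_c y_i\vb_c$ contributes nothing because $\la\vb_c,\vb_s\ra=0$, the spurious patch $\beta_s a_i\vb_s$ contributes $\beta_s^3 a_i\la\wb_j,\vb_s\ra^2$ (using $a_i^2=1$), and the $P-2$ noise patches give a residual term. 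This yields
\begin{equation*}
\la\nabla_{\wb_j}\cL_{S^0},\vb_s\ra
=\underbrace{\frac{3\beta_s^3}{N_0}\la\wb_j,\vb_s\ra^2\sum_{i=1}^{N_0}\ell_i' y_i a_i}_{\mathrm{signal}}
+\underbrace{\frac{3}{N_0}\sum_{i=1}^{N_0}\ell_i' y_i\sum_{p\in\mathrm{noise}(i)}\la\wb_j,\bxi_i^{(p)}\ra^2\la\bxi_i^{(p)},\vb_s\ra}_{\mathrm{noise}}.
\end{equation*}
The signal term is precisely the quantity whose sign and size controlled the growth of $\la\wb_j,\vb_s\ra$ in Theorem~\ref{lemma:stage1 spurous}, so it is the object to kill.

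The key step is to show the signal term vanishes at the start of training. Observe that $y_ia_i=+1$ exactly for the majority group $S_1$ and $y_ia_i=-1$ for $S_2$, so the signal sum equals $\sum_{i\in S_1}\ell_i'-\sum_{i\in S_2}\ell_i'$. At initialization $\Wb^{(0)}\sim\cN(0,\sigma_0^2)$ with $\sigma_0^2=\mathrm{polylog}(d)/d$, every margin $y_if_i$ is $\tilde{\cO}(\sigma_0^3)$, so each loss derivative equals $\ell'(0)+\tilde{\cO}(\sigma_0^3)=-\tfrac12+\tilde{\cO}(\sigma_0^3)$, and this leading value $-\tfrac12$ is identical across both groups. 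Since $\hat\alpha=1/2$ forces $|S_1|=|S_2|=N_0/2$, the dominant $-\tfrac12$ contributions cancel exactly, so the signal term carries nothing at the leading order; the residual cross-group discrepancy in $\ell_i'$ is governed by the spurious dependence of the margin, namely $y_ia_i\beta_s^3\sum_j\la\wb_j,\vb_s\ra^3=\tilde{\cO}(\beta_s^3\sigma_0^3)$, and is therefore of strictly higher order.

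The main obstacle is that the two data-dependent factors, the loss derivatives $\ell_i'$ and the noise patches, prevent an exact configuration-independent cancellation, so the honest reading of ``the gradient on $\vb_s$ is $0$'' is that the signal-level push is identically zero at $\hat\alpha=1/2$ while every surviving term is far smaller than the core-direction gradient. To control the noise term I would use that each $\la\bxi_i^{(p)},\vb_s\ra$ is a mean-zero Gaussian of scale $\sigma_p/\sqrt d$ and that at initialization the weights $\ell_i'y_i$ are a common constant up to $\tilde{\cO}(\sigma_0^3)$, so a concentration argument bounds the noise contribution along $\vb_s$ well below the scale of the core signal. Assembling these pieces, at $\hat\alpha=1/2$ the direction $\vb_s$ receives no systematic, signal-order force, so $\la\wb_j,\vb_s\ra$ remains at its $\tilde{\cO}(\sigma_0)$ initialization scale, in sharp contrast to the core direction, whose growth coefficient $\beta_c^3\hat\alpha$ in~\eqref{eq:core bounds} stays strictly positive and lets the model learn $\vb_c$ during the warm-up stage.
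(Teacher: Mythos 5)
Your proposal is correct and follows essentially the same route as the paper: project the gradient onto $\vb_s$, observe that $y_ia_i$ separates $S_1$ from $S_2$, and use $|S_1^0|=|S_2^0|$ at $\hat{\alpha}=1/2$ together with the near-identical loss derivatives at small initialization to cancel the spurious signal term. If anything, you are more careful than the paper's own proof, which treats all $\ell_i^{(t)}$ as one common $\Theta(1)$ constant to conclude an exact zero and silently drops the noise-patch overlap $\la\bxi_i,\vb_s\ra$ in \eqref{eq:v_s grad update}, whereas you explicitly bound both the higher-order residual in the loss derivatives and the noise projection by concentration.
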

In particular, with $\hat{\alpha}=1/2$ we have $|S_1^0|=|S_2^0|$: an equal amount of data is positively correlated with the spurious feature as the data negatively correlated with the spurious feature. In each update, both groups contribute nearly the same amount of spurious feature gradient with different signs, resulting in cancellation. Ultimately, this prevents the model from learning the spurious feature. 
Detailed proofs can be found in Appendix~\ref{appendix:motivation}. 

\paragraph{Expansion Stage.} In this stage, we proceed to train the model by incrementally incorporating new data into the training dataset. 
The rationale for this stage is grounded in the theoretical result by the previous work~\citep{jelassi2022towards} on GD with momentum, which demonstrates that once gradient descent with momentum initially increases its correlation with a feature $\vb$, it retains a substantial historical gradient in the momentum containing $\vb$. 
Put it briefly, the initial learning phase has a considerable influence on subsequent training for widely-used accelerated training algorithms. 
While ERM learns the spurious feature $\vb_s$ and momentum does not help, as we will show in synthetic experiments, ${\ourmethod}$ avoids learning $\vb_s$ and learns $\vb_c$ in the warm-up stage.
This momentum from warm-up, in turn, amplifies the core feature that is present in the gradients of newly added data, facilitating the continued learning of $\vb_c$ in the expansion stage. 
For a specific illustration, the learning of the core feature by GD+M will be 
\begin{align*}
    \la\wb_j^{(t+1)},\vb_c\ra
    &=\la\wb_j^{(t)}-\eta \big(\gamma g^{(t)}+(1-\gamma)\nabla_{\wb_j}\cL(\Wb^{(t)})\big),\vb_c\ra,
\end{align*}
where $g^{(t)}$ is the additional momentum as compared to GD with $\gamma=0$. While the current gradient along $\vb_c$ might be small (i.e., $\beta_c$), we can benefit from the historical gradient in $g^{(t)}$ to amplify the growth of $\la\wb_j^{(t+1)},\vb_c\ra$ and make it larger than that of the spurious feature (i.e., $\beta_s$).
This learning process will then correspond to the case when the model learns the core feature discussed in Subsection~\ref{sec:implication}.
Practically, we consider randomly selecting $m$ new examples for expansion every $J$ epochs by attempting to draw a similar number of examples from each group. During the last few epochs of the expansion stage, we expect the newly incorporated data exclusively from the larger group, as the smaller groups have been entirely integrated into the warm-up dataset.  

It is worth noting that while many works address the issue of identifying groups from datasets containing spurious correlations, we assume the group information is known and our algorithm focuses on the crucial subsequent question of optimizing group information utilization. Aiming to prevent the learning of spurious features, {\ourmethod} distinguishes itself by employing a rapid and lightweight warm-up stage and ensuring continuous improvement during the expansion stage with the momentum acquired from the warm-up dataset. Our training framework is both concise and effective, resulting in computational efficiency and ease of implementation.

\section{Experiments}
In this section, we present the experiment results from both synthetic and real datasets. Notably, we report the \underline{worst-group accuracy}, which assesses the minimum accuracy across all groups and is commonly used to evaluate the model's robustness against spurious correlations.

\subsection{Synthetic Data}
\begin{wraptable}{r}{8cm}
\caption{Synthetic data experiments. We report the worst-group accuracy and the gap (i.e., overall - worst). We further consider several variations of PDE to demonstrate the importance of each component of our method. \textbf{Reset}: we reset the momentum to zero after the warm-up stage. \textbf{Warmup+All}: we let PDE incorporate all of the new training data at once after the warm-up stage.}\label{tab:synthetic}
    \begin{tabular}{c c c c}
    \toprule
        & Worst-group ($\%$) & Gap ($\%$) \\
    \midrule
        ERM (GD) & $0.00$ & $97.71$  \\
        ERM (GD+M) & $0.00$ & $97.71$  \\
        Warmup+All (Reset) & $67.69$ & $31.18$ \\
        Warmup+All & $74.24$ & $24.76$ \\
        {\ourmethod} (Reset) & $92.51$ & $2.29$\\
        {\ourmethod} & $\textbf{93.01}$ &  $\textbf{0.03}$ \\
    \bottomrule
    \end{tabular}
\end{wraptable} 
In this section, we present synthetic experiment results in verification of our theoretical findings. In Appendix~\ref{appendix:synthetic}, 
we illustrate the detailed data distribution, hyper-parameters of the experiments and more extensive experiment results. The data used in this section is generated following Definition~\ref{def:data_distribution}. 
We consider the worst-group and overall test accuracy. 
As illustrated in Table~\ref{tab:synthetic}, ERM, whether trained with GD or GD+M, is unable to accurately predict the small group in our specified data distribution where $\hat{\alpha}=0.98$ and $\beta_c<\beta_s$. 
In contrast, our method significantly improves worst-group accuracy while maintaining overall test accuracy comparable to ERM. 
Furthermore, as depicted in Figure~\ref{fig:a}, ERM rapidly learns the spurious feature as it minimizes the training loss, while barely learning the core feature.  
Meanwhile, in Figure~\ref{fig:b} we show the learning of ERM when the data distribution breaks the conditions of our theory and has $\beta_c>\beta_s$ instead. 
Even with the same $\hat{\alpha}$ as in Figure~\ref{fig:a}, ERM  correctly learns the core feature despite the imbalanced group size. 
These two figures support the theoretical results we discussed to motivate our method. 
Consequently, on the same training dataset as in Figure~\ref{fig:a}, Figure~\ref{fig:c} shows that our approach allows the model to initially learn the core feature using the warm-up dataset and continue learning when incorporating new data. 

\begin{figure}[htp]
\centering     
\subfigure[ERM (case 1)]{\label{fig:a}\includegraphics[width=0.3\textwidth]{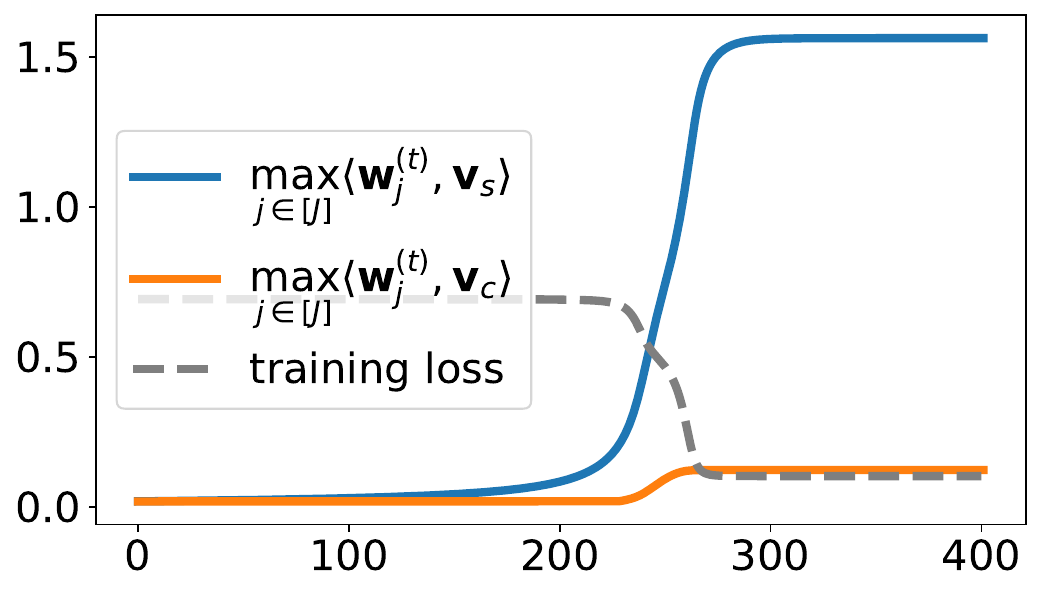}}
\subfigure[ERM (case 2)]
{\label{fig:b}\includegraphics[width=0.3\textwidth]{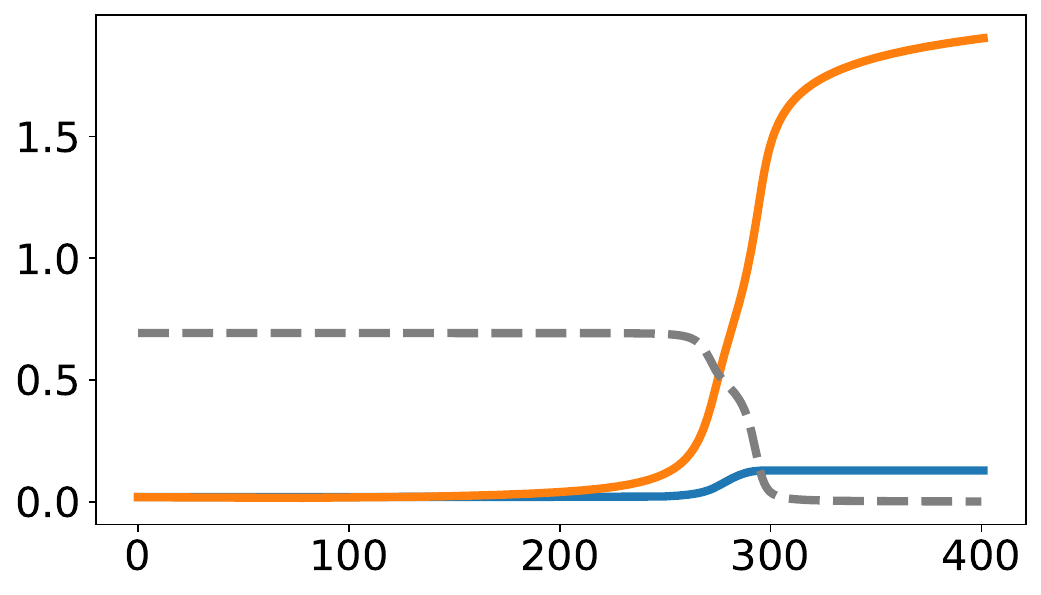}}
\subfigure[{\ourmethod}]{\label{fig:c}\includegraphics[width=0.3\textwidth]{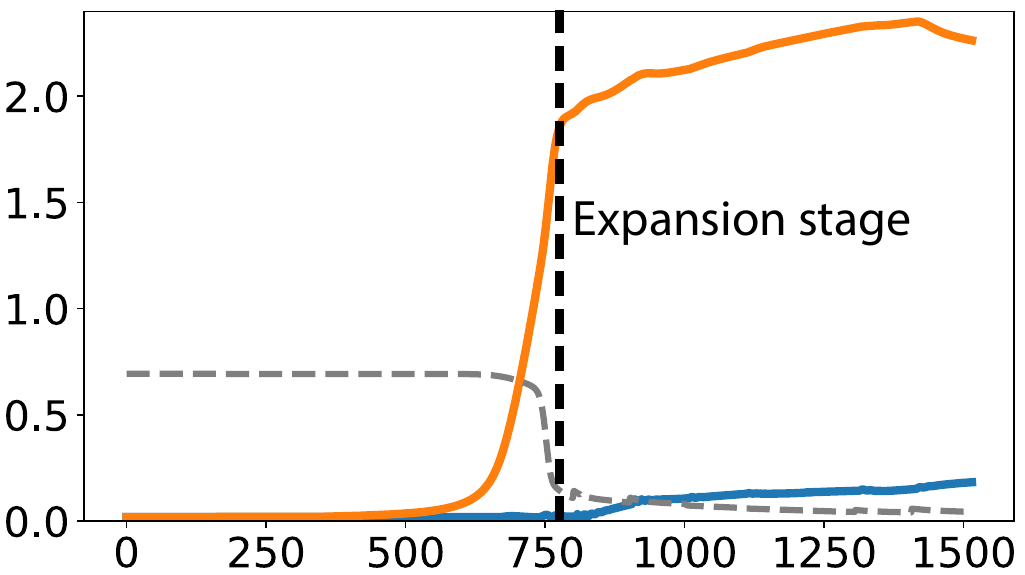}}
\caption{\textbf{Training process of ERM vs. {\ourmethod}.} We consider the same dataset generated from the distribution as in Definition~\ref{def:data_distribution} for ERM (case 1) and {\ourmethod}. On the same training data, ERM learns the spurious feature while {\ourmethod} successfully learns the core feature. We further consider ERM (case 2) when training on the data distribution where $\beta_c>\beta_s$ and $\hat{\alpha}=0.98$. 
We show the growth of the max inner product between the model's neuron and core/spurious signal vector and the decrease of training loss with regard to the number of iterations $t$.} 
\label{fig:demo_synthetic}
\end{figure}

\subsection{Real Data}
We conduct experiments on real benchmark datasets to (1) compare our approach with state-of-the-art methods, highlighting its superior performance and efficiency, and (2) offer insights into the design of our method through ablation studies. 

\noindent\textbf{Datasets.}
We evaluate on three wildly used datasets across vision and language tasks for spurious correlation: (1) \textbf{Waterbirds} \citep{sagawa2019distributionally} contains bird images labeled as waterbird or landbird, placed against a water or land background, where the smallest subgroup is waterbirds on land background. (2) \textbf{CelebA} \citep{liu2015deep} is used to study gender as the spurious feature for hair color classification, and the smallest group in this task is blond-haired males. (3) \textbf{CivilComments-WILDS} \citep{koh2021wilds} classifies toxic and non-toxic online comments while dealing with demographic information. It creates $16$ overlapping groups for each of the $8$ demographic identities. 

\noindent\textbf{Baselines.} We compare our proposed algorithm against several state-of-the-art methods. Apart from standard ERM, we include GroupDRO~\citep{sagawa2019distributionally} and DFR~\citep{kirichenko2023last} that assume access to the group labels. 
We note that $\text{DFR}^{\text{Val}}$ also uses the validation data for fine-tuning the last layer of the model. 
We also design a baseline called subsample that simply trains the model on the warm-up dataset only. Additionally, we evaluate three recent methods that address spurious correlations without the need for group labels: LfF~\citep{nam2020learning}, EIIL~\citep{creager2021environment}, and JTT~\citep{liu2021just}. 
We report results for ERM, Subsample, GroupDRO and {\ourmethod} based on our own runs using the WILDS library \cite{wilds2021}; for others, we directly reuse their reported numbers.

We present the experiment details including dataset statistics and hyperparameters as well as comprehensive additional experiments in Appendix~\ref{app:datasets}.\vspace{-.2cm}

\subsubsection{Consistent Superior Worst-group Performance}\vspace{-.2cm}
We assess {\ourmethod} on the mentioned datasets with state-of-the-art methods. Importantly, we emphasize the comparison with GroupDRO, as it represents the best-performing method that utilizes group information. As shown in Table~\ref{tab:baselines}, {\ourmethod} considerably enhances the worst-performing group's performance across all datasets, while maintaining the average accuracy comparable to GroupDRO. Considering all methods that only use validation data for model selection, GroupDRO still occasionally fails to surpass other methods. Remarkably {\ourmethod}'s performance consistently exceeds them in worst-group accuracy.\vspace{-.2cm}

\begin{table}[!t]
  \caption{The worst-group and average accuracy ($\%$) of {\ourmethod} compared with state-of-the-art methods. The \textbf{bold} numbers indicate the best results among the methods that \textit{require group information}, while the \underline{underscored} numbers represent methods that \textit{only train once}. All methods use validation data for early stopping and model selection, while $\surd$$\surd$ indicates that the method also re-trains the last layer using the validation data}
  \label{tab:baselines}
  \centering
  \resizebox{0.98\columnwidth}{!}{%
  \begin{tabular}{lcccararar}
    \toprule
    & \multirow{2}{2.5em}{\centering{\textbf{Group\\info}}} & \multirow{2}{2.5em}{\centering{\underline{Train}\\\underline{once}}} & \multirow{2}{2.5em}{\centering{Val\\info}} &\multicolumn{2}{c}{Waterbirds}&  \multicolumn{2}{c}{CelebA}&  \multicolumn{2}{c}{CivilComments}\\
    \cmidrule(r){5-10}
     Method & & & & Worst & Average & Worst & Average & Worst & Average\\
    \midrule
    ERM & $\times$ & $\surd$ & $\surd$ & $70.0_{\pm 2.3}$ & $97.1_{\pm 0.1}$ & $45.0_{\pm 1.5}$ & $94.8_{\pm 0.2}$ & $58.2_{\pm 2.8}$ & $92.2_{\pm 0.1}$ \\   
    LfF & $\times$ & $\times$ & $\surd$ & $78.0_{N/A}$ & $91.2_{N/A}$ & $77.2_{N/A}$ & $85.1_{N/A}$ & $58.8_{N/A}$ & $92.5_{N/A}$ \\
    EIIL & $\times$ & $\times$ & $\surd$ & $77.2_{\pm 1.0}$ & $96.5_{\pm 0.2}$ & $81.7_{\pm 0.8}$ & $85.7_{\pm 0.1}$ & $67.0_{\pm 2.4}$ & $90.5_{\pm 0.2}$ \\
    JTT & $\times$ & $\times$ & $\surd$ & $86.7_{N/A}$ & $93.3_{N/A}$ & $81.1_{N/A}$ & $88.0_{N/A}$ & $69.3_{N/A}$ & $91.1_{N/A}$ \\
    \midrule
    Subsample & $\surd$ & $\surd$ & $\surd$ & $86.9_{\pm 2.3}$ & $89.2_{\pm 1.2}$ & $86.1_{\pm 1.9}$ & $91.3_{\pm 0.2}$ & $64.7_{\pm 7.8}$ & $83.7_{\pm 3.4}$ \\ 
    $\text{DFR}^{\text{Tr}}$ & $\surd$ & $\times$ & $\surd$ & $90.2_{\pm 0.8}$ & $97.0_{\pm 0.3}$ & $80.7_{\pm 2.4}$ & $90.6_{\pm 0.7}$ & $58.0_{\pm 1.3}$ & $92.0_{\pm 0.1}$ \\
    $\text{DFR}^{\text{Val}}$ & $\surd$ & $\times$ & $\surd$$\surd$ & $\textbf{92.9}_{\pm 0.2}$ & $94.2_{\pm 0.4}$ & $88.3_{\pm 1.1}$ & $91.3_{\pm 0.3}$ & $70.1_{\pm 0.8}$ & $87.2_{\pm 0.3}$ \\
    GroupDRO & $\surd$ & $\surd$ & $\surd$ & $86.7_{\pm 0.6}$ & $93.2_{\pm 0.5}$ & $86.3_{\pm 1.1}$ & $92.9_{\pm 0.3}$ & $69.4_{\pm 0.9}$ & $89.6_{\pm 0.5}$ \\
    {\ourmethod} & $\surd$ & $\surd$ & $\surd$ & $\underline{90.3}_{\pm 0.3}$ & $92.4_{\pm 0.8}$ & $\underline{\textbf{91.0}}_{\pm
0.4}$ & $92.0_{\pm 0.6}$ & $\underline{\textbf{71.5}}_
{\pm 0.5}$ & $86.3_{\pm 1.7}$ \\
    \bottomrule
  \end{tabular}%
  }
\end{table}

\begin{table}[!t]
  \caption{Training efficiency of {\ourmethod} and GroupDRO on Waterbirds. We compare with GroupDRO at their learning rate and weight decay, as well as at ours. We report the worst-group accuracy, average accuracy and the number of epochs till early stopping as the model reached the best performance on validation data. Note: for a fair comparison, we consider one training epoch as training over the $N$ data as the size of the training dataset.}
  \label{tab:early-stop}
  \centering
  \begin{tabular}{lccrrc}
    \toprule
     Method & Learning rate & Weight decay & Worst & Average & Early-stopping epoch*\\
    \midrule
    GroupDRO & 1e-5 & 1e-0 & $86.7_{\pm 0.6}$ & $93.2_{\pm 0.5}$ & $92_{\pm 4}$ \\
    GroupDRO & 1e-2 & 1e-2 & \textcolor{maroon}{$77.3_{\pm 2.0}$} & $97.1_{\pm 0.5}$ & $15_{\pm 15}$ \\
    \ourmethod & 1e-2 & 1e-2 & $\textbf{90.3}_{\pm 0.3}$ & $92.4_{\pm 0.8}$ & $8.9_{\pm 1.8}$ \\
    \bottomrule
  \end{tabular}\vspace{-.5cm}
\end{table}

\subsubsection{Efficient Training}\vspace{-.2cm}
In this subsection, we show that our method is more efficient as it does not train a model twice (as in JTT) and more importantly avoids the necessity for a small learning rate (as in GroupDRO). 
Specifically, methods employing group-balanced batches like GroupDRO require a very small learning rate coupled with a large weight decay in practice. 
We provide an intuitive explanation as follows.
When sampling to achieve balanced groups in each batch, smaller groups appear more frequently than larger ones. 
If training progresses rapidly, the loss on smaller groups will be minimized quickly, while the majority of the large group data remains unseen and contributes to most of the gradients in later batches. 
Therefore, these methods necessitate slow training to ensure the model encounters diverse data from larger groups before completely learning the smaller groups. 
We validate this observation in Table~\ref{tab:early-stop}, where GroupDRO trained faster than the default results in significantly poorer performance similar to ERM.
Conversely, {\ourmethod} can be trained to converge rapidly on the warm-up set and reaches better worst-group accuracy $10\times$ faster than GroupDRO at default.  
Note that methods which only finetune the last layer~\citep{kirichenko2023last,wei2023distributionally} are also efficient. However, they still require training a model first using ERM on the entire training data till convergence. In contrast, PDE does not require further finetuning of the model.

\subsubsection{Understanding the Two Stages}
\begin{wraptable}{r}{6.8cm}
\caption{Performance of {\ourmethod} after each stage. We report the worst-group and average accuracy.}\label{tab:stage}
    \begin{tabular}{lccrrrrrr}
    \toprule
    & \multicolumn{2}{c}{Warm-up} & \multicolumn{2}{c}{Addition} \\
     Dataset & Worst & Avg  & Worst & Avg \\
    \midrule
    Waterbirds & 86.0 & 91.9 & \textbf{90.3} & 92.4 \\
    CelebA & 87.8 & 92.1 & \textbf{91.0} & 92.0 \\
    CivilComm & 67.7 & 78.8 & \textbf{71.5} & 86.3 \\
    \bottomrule
    \end{tabular}
\end{wraptable} 
We examine each component and demonstrate their effect in {\ourmethod}. In Table~\ref{tab:stage}, we present the worst-group and average accuracy of the model trained following the warm-up and expansion stages. Indeed, the majority of learning occurs in the warm-up stage, during which a satisfactory worst-group accuracy is established. 
In the expansion stage, the model persists in learning new data along the established trajectory, leading to continued performance improvement. In Figure~\ref{fig:reinit_momentum}, we corroborate and emphasize that the model has acquired the appropriate features and maintains learning based on its historical gradient stored in the momentum. 
As shown, if the optimizer is reset after the warm-up stage and loses all its historical gradients (with reinitialization), it soon acquires spurious features, resulting in a swift decline in performance accuracy as shown in the blue line.

\begin{figure}[!htp]
\centering   
\subfigure[Worst-group accuracy.]{\label{fig:wg_acc}\includegraphics[width=0.45\textwidth]{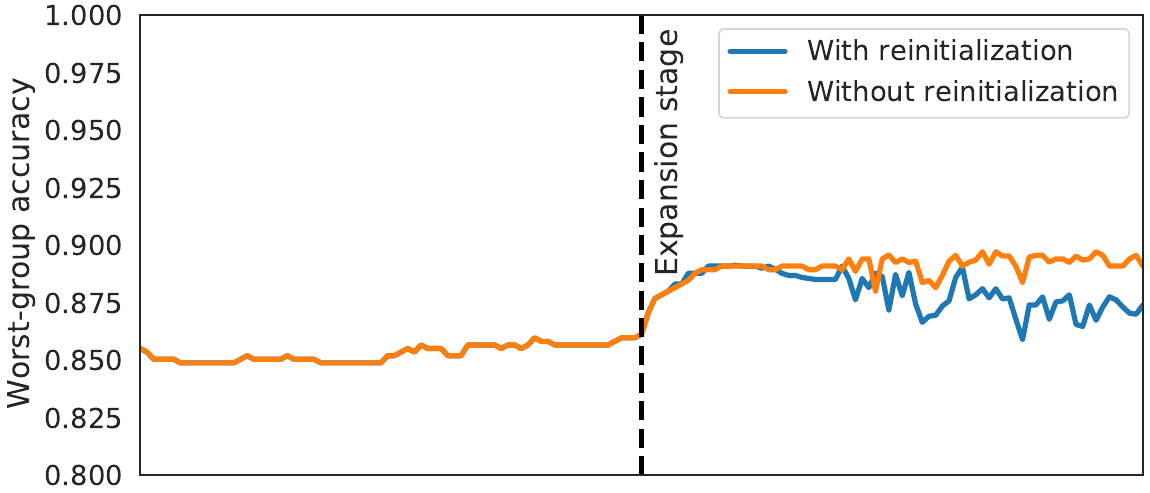}}
\subfigure[Average accuracy.]{\label{fig:avg_acc}\includegraphics[width=0.44\textwidth]{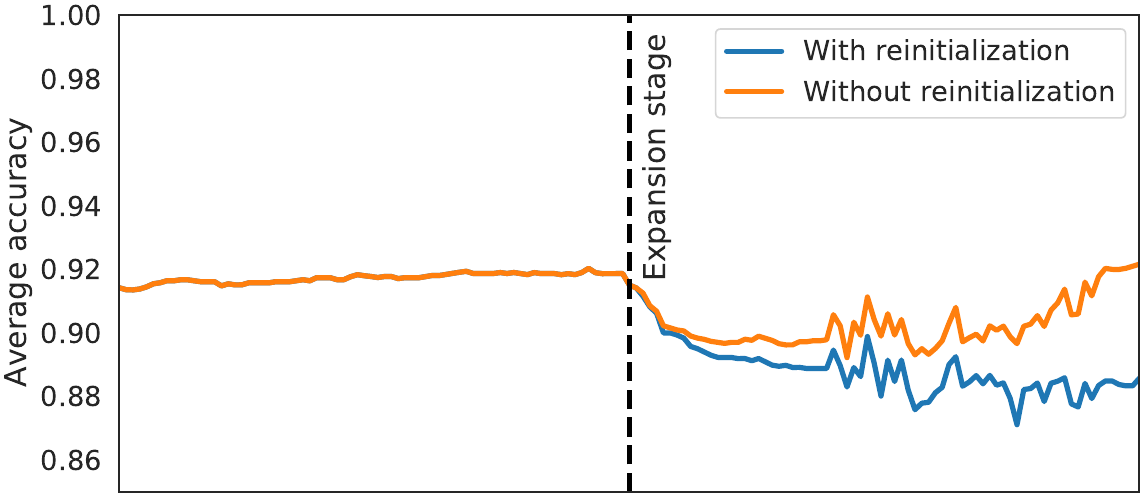}}
\caption{The effect of resetting the momentum after the warm-up stage for {\ourmethod} on Waterbirds.}
\label{fig:reinit_momentum}
\end{figure}

\subsubsection{Ablation Study on the Hyper-parameters of {\ourmethod}}
{\ourmethod} is robust within a reasonable range of hyperparameter choices, although some configurations outperform others. 
As shown in Table~\ref{tab:ablation_size}, it is necessary to limit the number of data points introduced during each expansion to prevent performance degradation.
Similarly, in Appendix~\ref{appendix:synthetic}, we emphasize the importance of gradual data expansion. 
In Table~\ref{tab:ablation_lr}, we show that post-warmup learning rate decay is essential, though PDE exhibits tolerance to the degree of this decay. 
Lastly, as illustrated in Figure~\ref{fig:reinit_momentum}, adopting a smaller learning rate often necessitates increased data expansions. 
Nonetheless, a reduced learning rate does not necessarily lead to improved performance.

\begin{figure}[!htp]
\centering     
\subfigure[Worst-group accuracy.]{\label{fig:wg_acc}\includegraphics[width=0.47\textwidth]{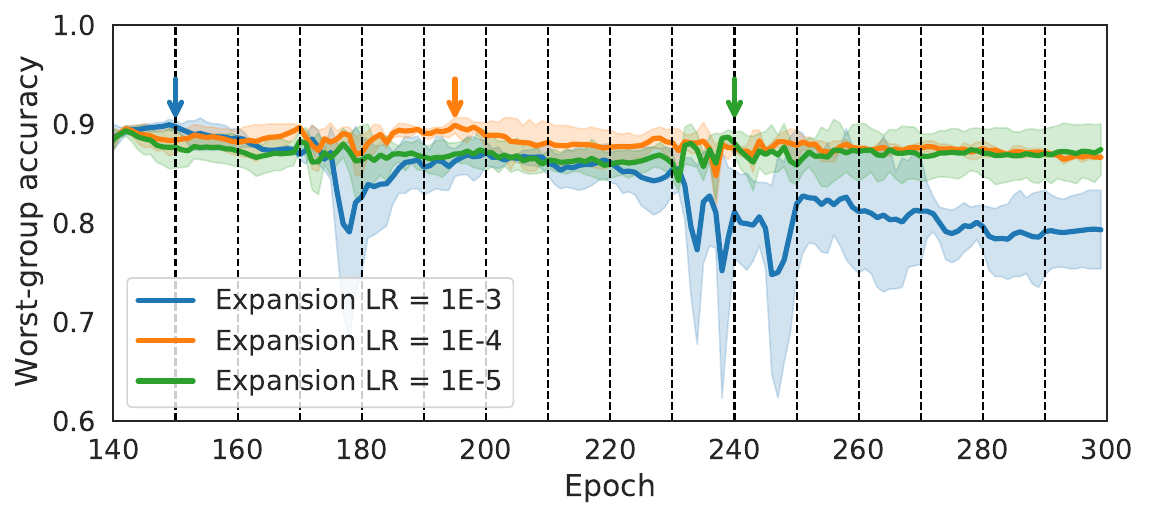}}
\subfigure[Average accuracy.]{\label{fig:avg_acc}\includegraphics[width=0.47\textwidth]{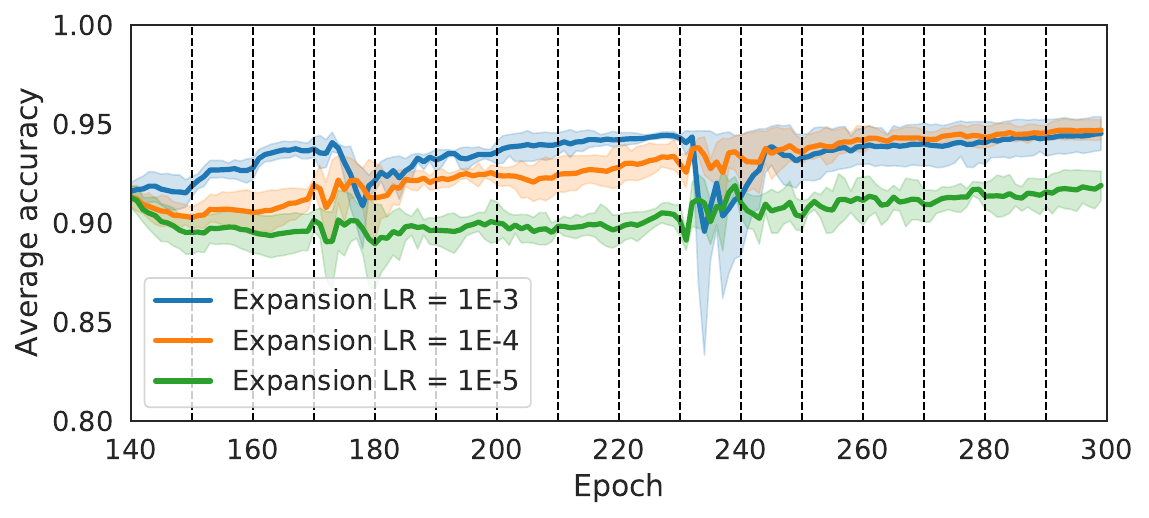}}
\caption{The variations in both worst-group and average accuracy on the test set of Waterbirds during the expansion stage under different expansion learning rates. Each vertical dashed line denotes an expansion and the arrow denotes the early stopping.}
\label{fig:reinit_momentum}
\end{figure}

\begin{minipage}{\textwidth}
    \begin{minipage}[b]{0.48\textwidth}
    \captionof{table}{Ablation study on Waterbirds. Exp. size: number of data points added in each expansion.}
    \centering
    \begin{tabular}{lrrr}
    \toprule
    Exp. size & Exp. lr & Worst & Average \\
    \midrule
    5 & 1e-4 & $89.9_{\pm 0.5}$ & $92.1_{\pm 0.3}$\\
    10 & 1e-4 & $\textbf{90.3}_{\pm 0.3}$ & $92.4_{\pm 0.8}$\\
    50 & 1e-4 & $88.1_{\pm 0.8}$ & $93.4_{\pm 0.4}$\\
    \bottomrule
  \end{tabular}
    \label{tab:ablation_size}
    \end{minipage}
      \hfill
  \begin{minipage}[b]{0.48\textwidth}
  \captionof{table}{Ablation study on Waterbirds. Exp. lr: the learning rate in the expansion stage.}
    \centering
    \begin{tabular}{lrrr}
    \toprule
    Exp. size & Exp. lr & Worst & Average \\
    \midrule
    10 & 1e-2 & $\textcolor{red}{85.4}_{\pm 3.1}$ & $92.1_{\pm 2.0}$ \\
    10 & 1e-3 & $89.4_{\pm 0.7}$ & $92.6_{\pm 0.3}$\\
    10 & 1e-5 & $89.5_{\pm 0.2}$ & $92.1_{\pm 0.1}$\\
    \bottomrule
  \end{tabular}
    \label{tab:ablation_lr}
  \end{minipage}
\end{minipage}

\section{Related Work}
Existing approaches for improving robustness against spurious correlations can be categorized into two lines of research based on the tackled subproblems. A line of research focuses on the same subproblem we tackle: effectively using the group information to improve robustness. 
With group information, one can use the distributionally robust optimization (DRO) framework and dynamically increase the weight of the worst-group loss in minimization~\citep{hu2018does,oren-etal-2019-distributionally,sagawa2019distributionally,zhang2021coping}. 
Within this line of work, GroupDRO~\citep{sagawa2019distributionally} achieves state-of-the-art performances across multiple benchmarks. 
Other approaches use importance weighting to reweight the groups~\citep{shimodaira2000improving,byrd2019effect,xu2021understanding} and class balancing to downsample the majority or upsample the minority~\citep{he2009learning,cui2019class,sagawa2020investigation}. 
Alternatively, \citet{goel2021model} leverage group information to augment the minority groups with synthetic examples generated using GAN. 
Another strategy~\citep{cao2019learning,cao2020heteroskedastic} involves imposing Lipschitz regularization around minority data points. Most recently, methods that train a model using ERM first and then only finetune the last layer on balanced data from training or validation~\citep{kirichenko2023last}, or on mixed representations~\citep{xue2023eliminating}, or learn post-doc scaling adjustments~\citep{wei2023distributionally} are shown to be effective. 

The other line of research focuses on the setting where group information is not available during training and tackles the first subproblem we identified as accurately finding the groups. 
Recent notable works~\citep{nam2020learning,liu2021just,creager2021environment,zhang2022correct,yang2023identifying} mostly involve training two models, one of which is used to find group information.
To finally use the found groups, many approaches~\citep{namkoong2017variance,duchi2019distributionally,oren-etal-2019-distributionally,sohoni2020no} still follow the DRO framework. 

The first theoretical analysis of spurious correlation is provided by \citet{sagawa2020investigation}.
For self-supervised learning, \citet{chen2020self} shows that fine-tuning with pre-trained models can reduce the harmful effects of spurious features. \citet{ye2022freeze} provides guarantees in the presence of label noise that core features are learned well only when less noisy than spurious features.
These theoretical works only provide analyses of linear models. 
Meanwhile, a parallel line of work has established theoretical analysis of nonlinear CNNs in the more realistic setting~\citet{allen2020towards,zou2021understanding,wen2021toward,chen2022towards,jelassi2022towards}. Our work builds on this line of research and generalizes it to the study of spurious features. Lastly, we notice that a concurrent work~\citep{chen2023towards} also uses tensor power method~\citep{allen2020towards} to analyze the learning of spurious features v.s. invariant features, but in the setting of out-of-distribution generalization. 

\section{Conclusion}
In conclusion, this paper addressed the challenge of spurious correlations in training deep learning models and focused on the most effective use of group information to improve robustness. We provided a theoretical analysis based on a simplified data model and a two-layer nonlinear CNN. Building upon this understanding, we proposed {\ourmethod}, a novel training algorithm that effectively and efficiently enhances model robustness against spurious correlations. This work contributes to both the theoretical understanding and practical application of mitigating spurious correlations, paving the way for more reliable and robust deep learning models. 

\noindent\textbf{Limitations and future work.} Although beyond the linear setting, our analysis still focuses on a relatively simplified binary classification data model. To better represent real-world application scenarios, future work could involve extending to multi-class classification problems and examining the training of transformer architectures. Practically, our proposed method requires the tuning of additional hyperparameters, including the number of warm-up epochs, the number of times for dataset expansion and the number of data to be added in each expansion. 

\section*{Acknowledgements}

We sincerely thank Dongruo Zhou for the constructive suggestions on the structure and writing of the paper. We also thank the anonymous reviewers for their helpful comments. YD and QG are supported in part by the National Science Foundation CAREER Award 1906169 and IIS-2008981, and the Sloan Research Fellowship. BM is supported by the National Science Foundation CAREER Award 2146492. The views and conclusions contained in this paper are those of the authors and should not be interpreted as representing any funding agencies.

\bibliography{neurips_2023_spurious}
\bibliographystyle{icml2023}

\newpage
\appendix

\section{Synthetic Experiments}\label{appendix:synthetic}
\textbf{Datasets.} We generate $10,000$ training examples and $10,000$ test examples from the data distribution defined in Definition~\ref{def:data_distribution} with dimension $d = 50$ and number of patches $P=3$. Specifically, we let $\alpha=0.98$, $\beta_c=0.2$, $\beta_s=1$ and $\sigma_p=0.78$ for Table~\ref{tab:synthetic} as well as Figure~\ref{fig:a} and Figure~\ref{fig:c}. For Figure~\ref{fig:b}, we consider a data distribution where $\alpha=0.98$, $\beta_c=1$, $\beta_s=0.2$ and $\sigma_p=0.78$. Furthermore, we randomly shuffle the order of the patches of $\xb$ after we generate data $(\xb, y, a)$.  

\textbf{Training.} We consider the performances of a nonlinear CNN trained with ERM and {\ourmethod}. The nonlinear CNN architecture follows \eqref{eq:model} with the cubic activation function, where we let the number of neurons/filters $J=40$. We use gradient descent with momentum (GD+M) as the optimizer of our method, setting the momentum to $0.9$ and the learning rate to $0.03$. The number of warm-up iterations is set to $800$. 
We consider ERM trained with GD with a learning rate $0.1$ and without momentum to align with our theoretical finding in both Table~\ref{tab:synthetic} and Figure~\ref{fig:demo_synthetic}. 
In Table~\ref{tab:synthetic}, we also show the experiment results for ERM trained with GD+M as same as {\ourmethod}.
All models are trained until convergence.

\textbf{Additional experiments.} In Figure~\ref{fig:demo_synthetic_gdm}, we demonstrate the growth of $\max_{j\in[J]}\la\wb_j^{(t)},\vb_s\ra$ and $\max_{j\in[J]}\la\wb_j^{(t)},\vb_c\ra$ for ERM trained with GD+M under the same data generated in Figure~\ref{fig:demo_synthetic}. Similarly, we observe that ERM learns the spurious feature quickly as the training loss is minimized under our data distribution. Meanwhile, if the data is generated as in case 2 where $\beta_c>\beta_s$, ERM learns the core feature correctly.
\begin{figure}[htp]
\centering     
\subfigure[ERM (case 1)]{\label{fig:erm_gdm_a}\includegraphics[width=0.3\textwidth]{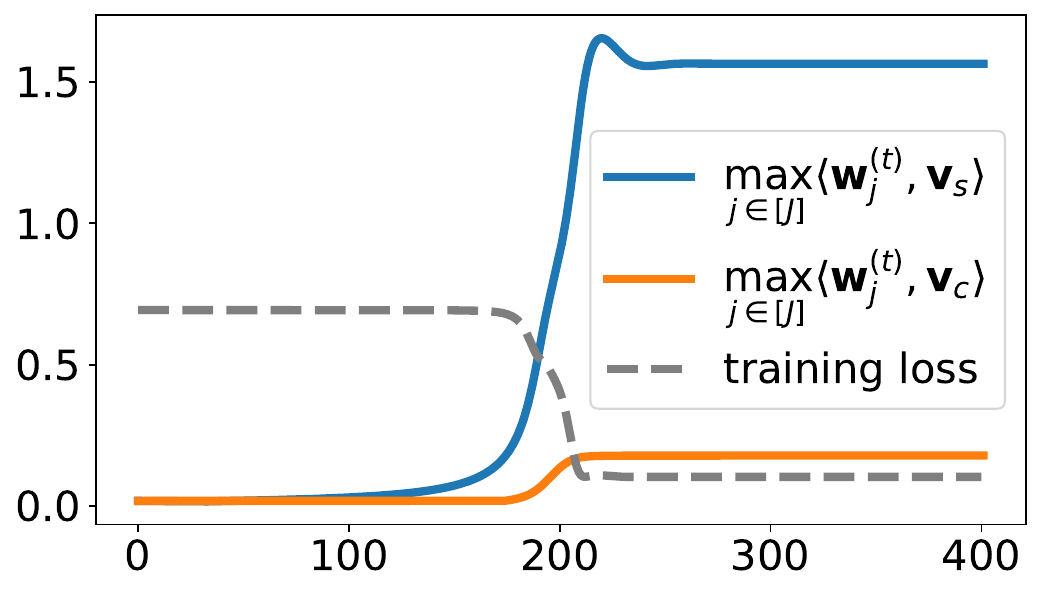}}
\subfigure[ERM (case 2)]
{\label{fig:erm_gdm_b}\includegraphics[width=0.3\textwidth]{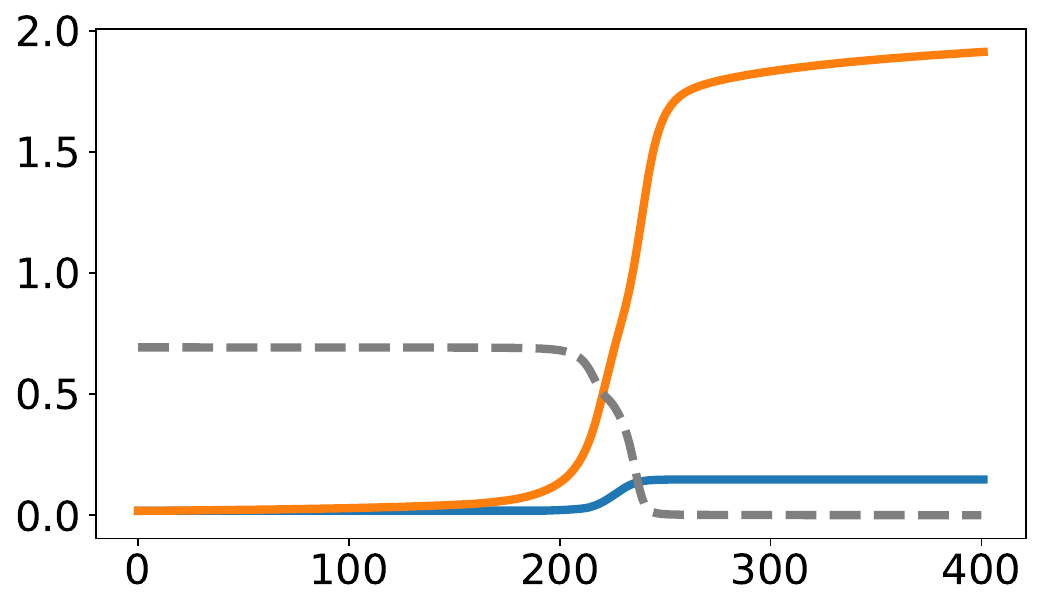}}
\caption{\textbf{Training process of ERM trained with GD+M.} We consider the same dataset generated in Figure~\ref{fig:demo_synthetic} and observe almost the same training process as ERM with GD, except GD+M learns the features faster.} 
\label{fig:demo_synthetic_gdm}
\end{figure}

Furthermore, we consider the following variation of our methods on the same dataset in Table~\ref{tab:synthetic} to demonstrate the importance of gradual expansion. In Figure~\ref{fig:demo_synthetic_ablation}, we let PDE incorporate all of the new training data at once after the warm-up stage. As demonstrated, adding all data at once makes it harder for the model to continue learning core features, resulting in a worst-group accuracy of $74.24\%$ as compared to $94.32\%$ for progressive expansion.
\begin{figure}[htp]
\centering     
\includegraphics[width=0.33\textwidth]{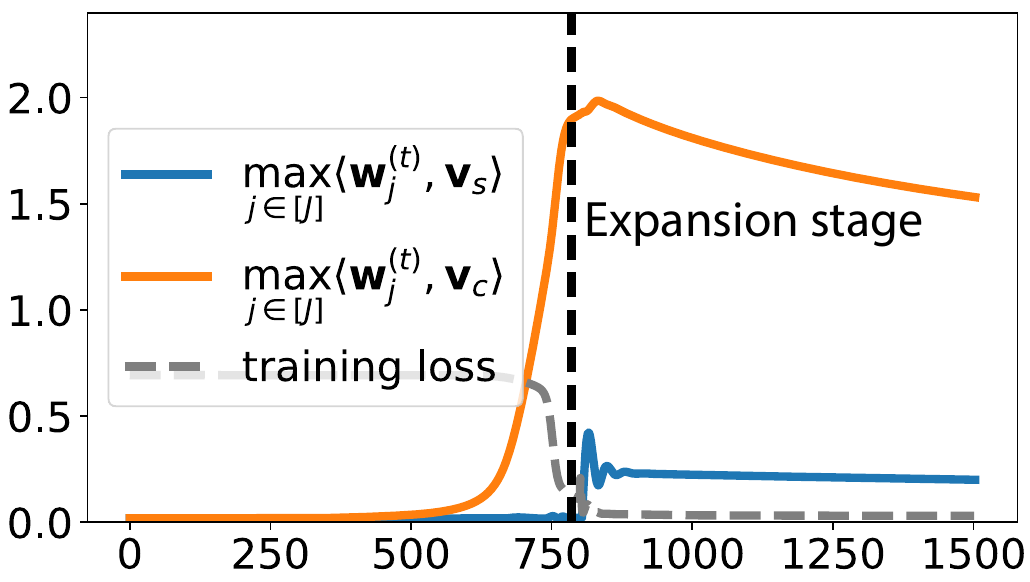}
\caption{\textbf{Variation of {\ourmethod}.} We consider the same dataset generated in Figure~\ref{fig:demo_synthetic} and add all data at once after the warm-up stage.} 
\label{fig:demo_synthetic_ablation}
\end{figure}

\section{Benchmark Datasets}
\label{app:datasets}
\textbf{Waterbirds.} The Waterbirds dataset \citep{sagawa2019distributionally} was constructed to study object recognition models relying on image backgrounds instead of the object itself. To this end, bird images from the Caltech-UCSD Birds-200-2011 (CUB) dataset \citep{wah2011caltech} were combined with backgrounds from the Places dataset \citep{zhou2017places}. 
The dataset contains $4,795$ bird images labeled as a waterbird or landbird and placed against a water or land background. Waterbirds are predominantly located against a water background, while landbirds are situated against a land background. Notably, the smallest subgroup in the dataset is waterbirds on land, consisting of only $56$ examples.

\textbf{CelebA.} The CelebA dataset \citep{liu2015deep} is a popular face attribute dataset used to examine the spurious associations between non-demographic and demographic attributes. Specifically, one of the $40$ binary attributes, ``blond hair", is used as the target attribute, and ``male" is the spurious attribute. The dataset contains $162,770$ training examples, with the smallest group being blond-haired males, with only 1387 examples. 

\textbf{CivilComments-WILDS.} The CivilComments-WILDS dataset \citep{koh2021wilds} is designed to explore the challenge of classifying online comments as either toxic or non-toxic while dealing with the spurious correlation between the label and demographic information such as gender, race, religion, and sexual orientation. The dataset's evaluation metric, as defined by \cite{koh2021wilds}, creates 16 overlapping groups for each of the eight demographic identities, resulting in a total of $512$ distinct groups. For each group, the metric calculates the worst-case performance of a classifier, which allows for a robust evaluation of the model's ability to generalize across diverse populations.

\section{Real Data Experiments}
\textbf{Setup.} Our experiment settings strictly follow the same setting used for datasets introduced in \cref{app:datasets} in previous works \citep{sagawa2019distributionally,liu2021just,nam2020learning,creager2021environment,kirichenko2023last}. Specifically, we built our training pipeline with the WILDS package \citep{wilds2021} which uses pretrained ResNet-50 model \citep{he2016deep} in Pytorch \citep{pytorch} library for the image datasets (i.e., Waterbirds and CelebA) and Transformer \citep{vaswani2017attention} in Transformers library \citep{wolf-etal-2020-transformers} for CivilComments-WILDS.  All experiments were conducted on a single NVIDIA RTX A6000 GPU with 48GB memory. 

\begin{table}
  \caption{Number of data in our warm-up dataset for \ourmethod's results in \cref{tab:baselines}. We also report the number of data in total for the three datasets.}
  \label{tab:app-datanum}
  \centering
  \begin{tabular}{lcc}
    \toprule
     Dataset & Warm-up & All \\
    \midrule
    Waterbirds & 224 & 4,795\\
    CelebA & 5,548 & 162,770 \\
    CivilComments-WILDS & 13,705 & 269,038 \\
    \bottomrule
  \end{tabular}
\end{table}

\textbf{Training.} In Table~\ref{tab:app-datanum}, we summarize the number of data used in the warm-up stage for {\ourmethod} in  \cref{tab:baselines} with the total number of data in the entire datasets. In \cref{tab:app-hyper}, we report the hyperparameters used for {\ourmethod} with the notations in \cref{alg:method}. 
Specifically, $T_0$ refers to the number of epochs for the warm-up stage and $J$ refers to the number of epochs for training after each data expansion. Lastly, $m$ is the number of added data for each data expansion. Our batch size is consistent with GroupDRO.
\begin{table}[ht]
  \caption{Hyperparameters used for \ourmethod's results in \cref{tab:baselines}. Note that $T_0$ and $J$ are in epochs of \ourmethod's training set, which have fewer iterations than epochs of the full training set.}
  \label{tab:app-hyper}
  \centering
  \begin{tabular}{lccccccc}
    \toprule
     Dataset & Learning rate & Weight decay & Batch size & $T_0$ & $J$ & $m$ \\
    \midrule
    Waterbirds & 1e-2 & 1e-2 & 64 & 140 & 10 & 10  \\
    CelebA & 1e-2 & 1e-4 & 128 & 16 & 10 & 50 \\
    CivilComments-WILDS & 1e-5 & 1e-2 & 16 & 15 & 2 & 300 \\
    \bottomrule
  \end{tabular}
\end{table}

\textbf{Groups for CivilComments-WILDS.} We note that the demographic tags in CivilComments-WILDS can coexist in the input text. For example, a text can contain both tags of female and male. Therefore, combining the 8 demographic tags with the binary classification label (toxic vs. non-toxic) results in 16 overlapping groups, where each group counts as data from a class with/without a specific tag. For computational efficiency, previous methods divide the data into four non-overlapping groups either by the \textit{specific} one demographic tag $a_i$ (groups are $\{y=\pm1,a_i=\pm1\}$) \citep{koh2021wilds} or by containing \textit{any} one of the tags: $a=1$ if any $a_i=1$ and $a=-1$ otherwise (groups are $\{y=\pm1,a=\pm1\}$) \citep{liu2021just,creager2021environment}. However, the data can actually be partitioned into $512$ distinct groups, with each group corresponding to different combinations of tags: $\{y=\pm1,a_1=\pm1,a_2=\pm1,\ldots,a_n=\pm1\}$. As GroupDRO requires computation per group at each training batch, considering a large number of groups makes it harder for GroupDRO to train efficiently. Meanwhile, having more groups does not impose an additional computational cost on \ourmethod, so we can consider all these data groups when constructing our warm-up set. As many groups are empty or contain very little data, we set a threshold to select at most $150$ data points from each group to ensure a balanced yet sufficient warm-up set.

\textbf{Efficiency.}  In Table~\ref{tab:early-stop-appendix}, we further report the training efficiency of {\ourmethod} compared with GroupDRO on CelebA and CivilComments-WILDS. Similar to what we observe on the Waterbirds dataset, {\ourmethod} achieves the best performance at a larger learning rate and smaller weight decay on CelebA with a significant speedup as compared to GroupDRO. On CivilComments-WILDS, we can also observe an improved efficiency.
\begin{table}[ht]
  \caption{Training efficiency of {\ourmethod} and GroupDRO on CelebA dataset.}
  \label{tab:early-stop-appendix}
  \centering
  \begin{tabular}{lccrrc}
    \toprule
     Method & Learning rate & Weight decay & Worst & Average & Early-stopping epoch*\\
    \midrule
    GroupDRO & 1e-5 & 1e-1 & $86.3_{\pm 1.1}$ & $92.9_{\pm 0.3}$ & $23.7_{\pm 6.8}$ \\
    \ourmethod & 1e-2 & 1e-4 & $\textbf{91.0}_{\pm
 0.4}$ & $92.0_{\pm 0.6}$ & $\textbf{0.7}_{\pm 0.3}$ \\
    \bottomrule
  \end{tabular}
\end{table}

\begin{table}[ht]
  \caption{Training efficiency of {\ourmethod} and GroupDRO on CivilComments-WILDS dataset.}
  \label{tab:early-stop-appendix-civil}
  \centering
  \begin{tabular}{lccrrc}
    \toprule
     Method & Learning rate & Weight decay & Worst & Average & Early-stopping epoch*\\
    \midrule
    GroupDRO & 1e-5 & 1e-2  & $69.4_{\pm 0.9}$ & $89.6_{\pm 0.5}$ & $3.3_{\pm 2.1}$\\
    \ourmethod & 1e-5 & 1e-2 & $\textbf{71.5}_
 {\pm 0.5}$ & $86.3_{\pm 1.7}$ & $\textbf{2.1}_{\pm 1.1}$ \\
    \bottomrule
  \end{tabular}
\end{table}

\noindent\textbf{Data Augmentation.} Additionally, the increased training speed of our method facilitates the usage of techniques such as data augmentation. 
While data augmentation is a common practice for improving model generalization, DRO approaches have not incorporated it into their methods. 
We hypothesize that this omission stems from the slower training process. Data augmentation introduces random noise to the training data, which complicates convergence during training when using a very small learning rate. As illustrated in Table~\ref{tab:aug}, data augmentation leads to slightly worse performance for GroupDRO. In contrast, our method effectively benefits from data augmentation.
\begin{table}[ht]
    \captionof{table}{The effect of data augmentation on GroupDRO and {\ourmethod} on Waterbirds dataset. We report the worst-group and average accuracy.}
    \centering
    \begin{tabular}{lccrrrrrr}
    \toprule
    & \multicolumn{2}{c}{GroupDRO} & \multicolumn{2}{c}{{\ourmethod}} \\
     Method & Worst & Avg & Worst & Avg \\
    \midrule
    W/o data aug & $86.7$ & $93.2$ & $88.9$ & $89.5$ \\
    W/ data aug & \textcolor{maroon}{$85.7$} & $96.6$ & $\textbf{90.3}$ & $\textbf{92.4}$ \\
    \bottomrule
    \end{tabular}
    \label{tab:aug}
\end{table}

\vspace{3pt}
\section{Proof Preliminaries}
\noindent\textbf{Notation.} In this paper, we use lowercase letters, lowercase boldface letters, and uppercase boldface letters to respectively denote scalars ($a$), vectors ($\vb$), and matrices ($\Wb$). We use $\text{sgn}$ to denote the sign function.For a vector $\vb$, we use $\|\vb\|_2$ to denote its Euclidean norm. Given two sequences $\{x_n\}$ and $\{y_n\}$, we denote $x_n = \cO(y_n)$ if $|x_n|\le C_1 |y_n|$ for some absolute positive constant $C_1$, $x_n = \Omega(y_n)$ if $|x_n|\ge C_2 |y_n|$ for some absolute positive constant $C_2$, and $x_n = \Theta(y_n)$ if $C_3|y_n|\le|x_n|\le C_4 |y_n|$ for some absolute constants $C_3,C_4>0$. We use $\tilde \cO(\cdot)$ to hide logarithmic factors of $d$ in $\cO(\cdot)$.

Before we go into the analysis, we first consider the following gradient,
\begin{align}
    \nabla_{\wb_j}\cL(\Wb^{(t)})=-\frac{1}{N}\sum^{N}_{i=1}\frac{\exp(-y_if(\xb_i;\Wb^{(t)}))}{1+\exp(-y_if(\xb_i;\Wb^{(t)}))}\cdot y_i f'(\xb_i;\Wb^{(t)}).
\end{align}
Let's denote the derivative of a data example $i$ at iteration $t$ to be 
\begin{align}
\ell^{(t)}_i=\frac{\exp(-y_if(\xb_i;\Wb^{(t)}))}{1+\exp(-y_if(\xb_i;\Wb^{(t)}))}=\text{sigmoid}(-y_if(\xb_i;\Wb^{(t)})).
\end{align} 
\begin{lemma}\label{lemma:grad}
    (Gradient) Let the loss function $\cL$ be as defined in \eqref{eq:empirical loss}. For $t\ge 0$ and $j\in [J]$, the gradient of the loss $\cL$ with regard to neuron $\wb_j$ is
    \begin{align*}
        \nabla_{\wb_j}\cL(\Wb^{(t)})&=-\frac{3}{N}\bigg(\beta_c^3\sum_{i=1}^N \ell_i^{(t)}\la\wb_j,\vb_c\ra^2\vb_c +\sum_{i=1}^N\ell_i^{(t)}y_i\la\wb_j,\bxi_i\ra^2\bxi_i+\\
        &\qquad\Big(\sum_{i\in S_1}\ell_i^{(t)}-\sum_{i\in S_2}\ell_i^{(t)}\Big)\cdot\beta_s^3\la\wb_j,\vb_s\ra^2\vb_s\bigg).
    \end{align*}
\end{lemma}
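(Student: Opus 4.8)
The plan is to apply the chain rule to the expression for $\nabla_{\wb_j}\cL(\Wb^{(t)})$ recorded just above the lemma and then exploit the structure of the data model from Definition~\ref{def:data_distribution}. Starting from $\nabla_{\wb_j}\cL(\Wb^{(t)}) = -\frac{1}{N}\sum_{i=1}^N \ell_i^{(t)} y_i \nabla_{\wb_j} f(\xb_i;\Wb^{(t)})$, the first step is to compute the per-sample gradient $\nabla_{\wb_j} f(\xb_i;\Wb)$. Since $f$ sums the cubic activation over all filters and all patches but only the $j$-th filter depends on $\wb_j$, differentiating and using $\sigma'(z)=3z^2$ gives $\nabla_{\wb_j} f(\xb_i;\Wb) = 3\sum_{p=1}^P \la\wb_j,\xb_i^{(p)}\ra^2 \xb_i^{(p)}$.

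Next I would substitute the three patch types. By Definition~\ref{def:data_distribution} each $\xb_i$ contains exactly one core patch $\beta_c y_i\vb_c$, one spurious patch $\beta_s a_i\vb_s$, and one noise patch $\bxi_i$; crucially the sum over $p$ is invariant to the ordering of the patches, so the randomized patch positions play no role. Using $\|\vb_c\|_2=\|\vb_s\|_2=1$, $y_i^2=a_i^2=1$, and pulling the scalars out of the squared inner products yields $\nabla_{\wb_j} f(\xb_i;\Wb) = 3\beta_c^3 y_i\la\wb_j,\vb_c\ra^2\vb_c + 3\beta_s^3 a_i\la\wb_j,\vb_s\ra^2\vb_s + 3\la\wb_j,\bxi_i\ra^2\bxi_i$.

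Multiplying by $y_i$ and again invoking $y_i^2=1$ collapses the core coefficient so that the core term loses its label dependence, leaves the noise term carrying a single factor $y_i$, and turns the spurious coefficient into $y_i a_i$. The final step is the only place the group structure enters: since $y_i a_i = +1$ for $i\in S_1$ (where $a_i=y_i$) and $y_i a_i = -1$ for $i\in S_2$ (where $a_i=-y_i$), the aggregated spurious coefficient $\sum_{i=1}^N \ell_i^{(t)} y_i a_i$ splits as $\sum_{i\in S_1}\ell_i^{(t)} - \sum_{i\in S_2}\ell_i^{(t)}$. Collecting the three terms and factoring out $3/N$ then gives exactly the claimed expression. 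There is no genuine analytic obstacle here, as the lemma is a direct computation; the only points requiring care are bookkeeping: tracking how the cube of each feature strength arises from the scalar inside the squared inner product combined with the scalar in the patch itself, and correctly resolving the signs $y_i a_i$ into the $S_1/S_2$ decomposition.
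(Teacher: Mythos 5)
Your proposal is correct and follows essentially the same route as the paper's proof: differentiate the cubic activation to get the factor $3\la\wb_j,\xb_i^{(p)}\ra^2\xb_i^{(p)}$, substitute the core, spurious, and noise patches, use $y_i^2=a_i^2=1$ to collapse the signs, and resolve $y_ia_i=\pm 1$ into the $S_1$/$S_2$ split of the spurious coefficient. The only (harmless) difference is presentational—you compute $\nabla_{\wb_j}f$ per sample before multiplying by $y_i$ and explicitly note the patch-order invariance, whereas the paper carries everything through in a single chain of equalities.
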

\begin{proof}
We have the following gradient
\begin{align}
    \nabla_{\wb_j}\cL(\Wb^{(t)})=-\frac{1}{N}\sum^{N}_{i=1}\frac{\exp(-y_if(\xb_i;\Wb^{(t)}))}{1+\exp(-y_if(\xb_i;\Wb^{(t)}))}\cdot y_i f'(\xb_i;\Wb^{(t)}).
\end{align}
And let's denote the derivative of a data example $i$ at iteration $t$ to be 
\begin{align}
\ell^{(t)}_i=\frac{\exp(-y_if(\xb_i;\Wb^{(t)}))}{1+\exp(-y_if(\xb_i;\Wb^{(t)}))}=\text{sigmoid}(-y_if(\xb_i;\Wb^{(t)})).
\end{align}
Then, we can further write the gradient as 
\begin{align*}
    \nabla_{\wb_j}\cL(\Wb^{(t)})&=-\frac{3}{N}\sum^{N}_{i=1}\ell^{(t)}_iy_i\sum^P_{p=1}\la\wb_j,\xb^{(p)}\ra^2\cdot\xb^{(p)}\\
    &= -\frac{3}{N}\sum^{N}_{i=1}\ell^{(t)}_iy_i \Big(\la\wb_j,\beta_cy_i\vb_c\ra^2\beta_cy_i\vb_c+\la\wb_j,\beta_sa_i\vb_s\ra^2\beta_sa_i\vb_s+\la\wb_j,\bxi_i\ra^2\bxi_i\Big) \\
    &= -\frac{3}{N}\sum^{N}_{i=1}\ell^{(t)}_i \Big(\beta_c^3\la\wb_j,\vb_c\ra^2\vb_c+\beta_s^3y_ia_i\la\wb_j,\vb_s\ra^2\vb_s+y_i\la\wb_j,\bxi_i\ra^2\bxi_i\Big)\\
    &=-\frac{3}{N}\Bigg(\sum_{i=1}^N\ell_i^{(t)}\bigg(\beta_c^3\la\wb_j,\vb_c\ra^2\vb_c+y_i\la\wb_j,\bxi_i\ra^2\bxi_i\bigg)\\
    &\qquad+\Big(\sum_{i\in S_1}\ell_i^{(t)}-\sum_{i\in S_2}\ell_i^{(t)}\Big) \beta_s^3\la\wb_j,\vb_s\ra^2\vb_s\Bigg), 
\end{align*}
where the last equality holds due to that for $i\in S_1$ we have $a_i=y_i$ and for $i\in S_2$ we have $a_i=-y_i$.
\end{proof}
With the gradient, we have the following:

\noindent\textbf{Core feature gradient.} The projection of the gradient on $\vb_c$ is then 
\begin{align}
    \la\nabla_{\wb_j}\cL(\Wb^{(t)}),\vb_c\ra=-\frac{3\beta_c^3}{N}\sum_{i=1}^N \ell_i^{(t)}\la\wb_j,\vb_c\ra^2.
     \label{eq:v_c grad update}
\end{align}
\noindent\textbf{Spurious feature gradient.} The projection of the gradient on $\vb_s$ is
\begin{align}
    \la\nabla_{\wb_j}\cL(\Wb^{(t)}),\vb_s\ra=-\frac{3\beta_s^3}{N}\Big(\sum_{i\in S_1}\ell_i^{(t)}-\sum_{i\in S_2}\ell_i^{(t)}\Big)\cdot\la\wb_j,\vb_s\ra^2.
    \label{eq:v_s grad update}
\end{align}
\noindent\textbf{Noise gradient.} The projection of the gradient on $\bxi_i$ is
\begin{align}
    \la\nabla_{\wb_j}\cL(\Wb^{(t)}),\bxi_i\ra=-\frac{3y_i}{N}\sum_{i=1}^N \ell_i^{(t)}\la\wb_j,\bxi_i\ra^2 \|\bxi_i\|_2^2.
    \label{eq:noise grad update}
\end{align}
\noindent\textbf{Derivative of data example $i$.} $\ell^{(t)}_i$ can be rewritten as 
\begin{align}
    \ell^{(t)}_i &= \text{sigmoid}\big(-y_if(\xb_i;\Wb^{(t)})\big)\nonumber\\
    &= \text{sigmoid}\Big(\sum^J_{j=1}-\beta_c^3\la\wb_j,\vb_c\ra^3-y_ia_i\beta_s^3\la\wb_j,\vb_s\ra^3-y_i\la\wb_j,\bxi_i\ra^3\Big).\label{eq:data derivative}
\end{align}
Note that $0<\ell^{(t)}_i<1$ due to the property of the sigmoid function. Furthermore, we similarly consider that the sum of the sigmoid terms for all time steps is bounded up to a logarithmic dependence~\citep{chen2022towards}. The sigmoid term is considered small for a $\kappa$ such that
\begin{align*}
    \sum_{t=0}^{T}\frac{1}{1+\exp(\kappa)} \le \tilde{O}(1),
\end{align*}
which implies $\kappa\ge\tilde{\Omega}(1)$. 

\section{Proof of Theorem~\ref{lemma:stage1 spurous}}\label{appendix:analysis}
In this section, we present the detailed proofs that build up to Theorem~\ref{lemma:stage1 spurous}. We begin by considering the update for the spurious feature and core feature.
\begin{lemma}[Spurious feature update.]\label{lemma:spurious update all}
    For all $t\ge 0$ and $j\in [J]$, the spurious feature update is
    \[\la\wb_j^{(t+1)},\vb_s\ra=\la\wb_j^{(t)},\vb_s\ra+\frac{3\eta\beta_s^3}{N}\Big(\sum_{i\in S_1}\ell_i^{(t)}-\sum_{i\in S_2}\ell_i^{(t)}\Big)\la\wb_j^{(t)},\vb_s\ra^2,\]
    which gives
    \begin{align*}
       \tilde\Theta(\eta)\beta_s^3\Big(\hat{\alpha} g_1(t) - \sum_{i\in S_2}\ell_i^{(t)}/N \Big)\la\wb_j^{(t)},\vb_s\ra^2
       &\le\la\wb_j^{(t+1)},\vb_s\ra- \la\wb_j^{(t)},\vb_s\ra\\
       &\le \tilde\Theta(\eta)\beta_s^3\cdot\hat{\alpha} g_1(t) \cdot \la\wb_j^{(t)},\vb_s\ra^2,
    \end{align*}
    where $g_1(t)=\text{sigmoid}\big(-\sum_{j\in[J]}(\beta_c^3 \la\wb_j^{(t)},\vb_c\ra^3+\beta_s^3 \la\wb_j^{(t)},\vb_s\ra^3)\big)$.
\end{lemma}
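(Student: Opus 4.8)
The plan is to establish the exact equality first and then derive the two-sided bound from it. For the exact update, I would start from the gradient descent rule $\wb_j^{(t+1)} = \wb_j^{(t)} - \eta \nabla_{\wb_j}\cL(\Wb^{(t)})$, take the inner product of both sides with $\vb_s$, and substitute the spurious feature gradient projection \eqref{eq:v_s grad update} that was already computed in the preliminaries. Since $\la \vb_s, \vb_s\ra = \|\vb_s\|_2^2 = 1$ by assumption, the $\vb_s$-component of $\wb_j^{(t)}$ evolves in closed form and the cross terms from $\vb_c$ and the noise vanish because $\inner{\vb_c}{\vb_s}=0$ and the Gaussian noise is essentially orthogonal to $\vb_s$ with high probability. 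This yields exactly
\[
\la\wb_j^{(t+1)},\vb_s\ra=\la\wb_j^{(t)},\vb_s\ra+\frac{3\eta\beta_s^3}{N}\Big(\sum_{i\in S_1}\ell_i^{(t)}-\sum_{i\in S_2}\ell_i^{(t)}\Big)\la\wb_j^{(t)},\vb_s\ra^2.
\]

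Next I would convert the sum $\sum_{i\in S_1}\ell_i^{(t)}$ into something controllable by $\hat\alpha$ and the factor $g_1(t)$. The idea is that for data in the large group $S_1$ (where $a_i=y_i$), the derivative $\ell_i^{(t)}$ from \eqref{eq:data derivative} reduces to $\text{sigmoid}\big(-\sum_j(\beta_c^3\la\wb_j,\vb_c\ra^3+\beta_s^3\la\wb_j,\vb_s\ra^3)\big)$ up to the noise term, which is precisely $g_1(t)$ modulo the per-example noise contribution. Using $|S_1|=\hat\alpha N$ and showing that the noise term $y_i\la\wb_j,\bxi_i\ra^3$ inside the sigmoid is negligible (a concentration argument over the Gaussian patches, appealing to $\sigma_0^2 = \text{polylog}(d)/d$ and the $\tilde\Theta$ notation that hides logarithmic factors), I would approximate $\frac{1}{N}\sum_{i\in S_1}\ell_i^{(t)} \approx \hat\alpha\, g_1(t)$. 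The two-sided bound then follows by keeping the full $\hat\alpha\, g_1(t)$ term as the upper end (dropping the nonnegative $\sum_{i\in S_2}\ell_i^{(t)}/N$ subtraction) and retaining the subtraction $\hat\alpha\, g_1(t) - \sum_{i\in S_2}\ell_i^{(t)}/N$ for the lower end, with the $3\eta$ and the concentration error absorbed into $\tilde\Theta(\eta)$.

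The main obstacle will be controlling the noise contribution rigorously and uniformly over all $j\in[J]$ and all $t\in[0,T_0)$. Specifically, to collapse $\frac{1}{N}\sum_{i\in S_1}\ell_i^{(t)}$ to $\hat\alpha\, g_1(t)$ I must argue that $\la\wb_j^{(t)},\bxi_i\ra^3$ stays small relative to the feature terms throughout the warm-up window, which requires an inductive bound on the noise-memorization coefficients $\la\wb_j^{(t)},\bxi_i\ra$ alongside the feature growth — and these quantities are coupled. I would handle this by maintaining an induction hypothesis (the noise projections remain $\tilde\cO(\sigma_0)$ while $\la\wb_j^{(t)},\vb_s\ra$ grows at most to $\tilde\Omega(1/\beta_s)$) and verifying it propagates for the full $T_0=\tilde\Theta(1/(\eta\beta_s^3\sigma_0))$ iterations using the noise update \eqref{eq:noise grad update} and the $\sigma_p^2/d$ scaling of the Gaussian patches. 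The bookkeeping of which error terms land inside versus outside $g_1(t)$ is delicate, but the structural separation afforded by the orthogonality of $\vb_c,\vb_s$ and the concentration of the noise makes each individual step routine.
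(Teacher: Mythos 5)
Your proposal matches the paper's proof essentially step for step: the exact update is obtained by projecting the GD step onto $\vb_s$ via \eqref{eq:v_s grad update}, the upper bound follows by dropping the nonnegative $\sum_{i\in S_2}\ell_i^{(t)}/N$ term, the lower bound by retaining it, and the key reduction $\frac{1}{N}\sum_{i\in S_1}\ell_i^{(t)} = \Theta(1)\hat{\alpha}\, g_1(t)$ is exactly the paper's Lemma~\ref{lemma:large group derivative}, proved there by the same noise-projection control $|\la\wb_j^{(t)},\bxi_i\ra|\le\tilde{\cO}(\sigma_0\sigma\sqrt{d})$ that you propose to maintain inductively (the paper imports this as Lemma~\ref{lemma:induction noise} from \citet{jelassi2022towards} rather than reproving it).
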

\begin{proof}
    The spurious feature update is obtained by using the gradient update of $\Wb^{(t)}$ and plugging in \eqref{eq:v_s grad update}:
    \begin{align*}
        \la\wb_j^{(t+1)},\vb_s\ra
        &=\la\wb_j^{(t)}-\eta \nabla_{\wb_j}\cL(\Wb^{(t)}),\vb_s\ra\\
        &=\la\wb_j^{(t)},\vb_s\ra+\frac{3\eta\beta_s^3}{N}\Big(\sum_{i\in S_1}\ell_i^{(t)}-\sum_{i\in S_2}\ell_i^{(t)}\Big)\la\wb_j^{(t)},\vb_s\ra^2.
    \end{align*}
    We first prove the upper bound. Consider the following,
    \begin{align*}
        \la\wb_j^{(t+1)},\vb_s\ra        &=\la\wb_j^{(t)},\vb_s\ra+\frac{3\eta\beta_s^3}{N}\Big(\sum_{i\in S_1}\ell_i^{(t)}-\sum_{i\in S_2}\ell_i^{(t)}\Big)\la\wb_j^{(t)},\vb_s\ra^2 \\
        &\le \la\wb_j^{(t)},\vb_s\ra+\frac{3\eta\beta_s^3}{N}\Big(\sum_{i\in S_1}\ell_i^{(t)}\Big)\la\wb_j^{(t)},\vb_s\ra^2 \\
        &\le \la\wb_j^{(t)},\vb_s\ra+\tilde\Theta(\eta)\beta_s^3\cdot\frac{\sum_{i\in S_1}g_1(t)}{N}\cdot \la\wb_j^{(t)},\vb_s\ra^2\\
        &= \la\wb_j^{(t)},\vb_s\ra+\tilde\Theta(\eta)\beta_s^3\hat{\alpha}\cdot g_1(t)\cdot \la\wb_j^{(t)},\vb_s\ra^2,
    \end{align*}
    where the first inequality holds due to $0<\ell_i^{(t)}<1$, the second inequality holds due to Lemma~\ref{lemma:large group derivative}, and the last equality holds due to $|S_1|/N=\hat{\alpha}$. Then, for the lower bound, we consider the same bound for $i\in S_1$ in Lemma~\ref{lemma:large group derivative} and obtain
    \begin{align*}
        \la\wb_j^{(t+1)},\vb_s\ra        &=\la\wb_j^{(t)},\vb_s\ra+\frac{3\eta\beta_s^3}{N}\Big(\sum_{i\in S_1}\ell_i^{(t)}-\sum_{i\in S_2}\ell_i^{(t)}\Big)\la\wb_j^{(t)},\vb_s\ra^2 \\
        &\ge  \la\wb_j^{(t)},\vb_s\ra+\tilde\Theta(\eta)\beta_s^3\Big(\hat{\alpha}\cdot g_1(t) - \sum_{i\in S_2}\ell_i^{(t)}/N \Big)\la\wb_j^{(t)},\vb_s\ra^2.
    \end{align*}
\end{proof}
Similarly, we have the update for the core feature as below.
\begin{lemma}[Core feature update] \label{lemma:core update all}
    For all $t\ge 0$ and $j\in[J]$, the core feature update is
    \begin{align*}
        \la\wb_j^{(t+1)},\vb_c\ra=\la\wb_j^{(t)},\vb_c\ra+\frac{3\eta\beta_c^3}{N}\Big(\sum_{i=1}^N\ell_i^{(t)}\Big)\la\wb_j^{(t)},\vb_c\ra^2,
    \end{align*}
    which gives
    \begin{align*}
\tilde\Theta(\eta)\beta_c^3\cdot\hat{\alpha}g_1(t)\cdot \la\wb_j^{(t)},\vb_c\ra^2 &\le \la\wb_j^{(t+1)},\vb_c\ra- \la\wb_j^{(t)},\vb_c\ra \\
        &\le \tilde\Theta(\eta)\beta_c^3\cdot\Big(\hat{\alpha}g_1(t)+\sum_{i\in S_2}\ell_i^{(t)}/N\Big)\cdot \la\wb_j^{(t)},\vb_c\ra^2.
    \end{align*}
\end{lemma}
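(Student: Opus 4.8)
The plan is to mirror the argument used for the spurious-feature update (Lemma~\ref{lemma:spurious update all}), while exploiting one crucial structural difference between the two projections. First I would derive the exact one-step update by projecting the descent step onto $\vb_c$. Combining the gradient-descent rule \eqref{eq:GD update} with the core-feature gradient \eqref{eq:v_c grad update} gives
\[
\la\wb_j^{(t+1)},\vb_c\ra=\la\wb_j^{(t)}-\eta\nabla_{\wb_j}\cL(\Wb^{(t)}),\vb_c\ra=\la\wb_j^{(t)},\vb_c\ra+\frac{3\eta\beta_c^3}{N}\Big(\sum_{i=1}^N\ell_i^{(t)}\Big)\la\wb_j^{(t)},\vb_c\ra^2,
\]
which is exactly the claimed identity. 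The point to emphasize is that, unlike the spurious projection \eqref{eq:v_s grad update}, the core projection carries \emph{no} sign split between $S_1$ and $S_2$: in the core-patch derivative the factors of $y_i$ combine to $y_i^2=1$ and $a_i$ never appears, so every example contributes with the same positive sign and we obtain the full sum $\sum_{i=1}^N\ell_i^{(t)}$ rather than the difference $\sum_{S_1}-\sum_{S_2}$. By contrast the spurious patch retains the product $y_i a_i$, which is $+1$ on $S_1$ and $-1$ on $S_2$ and drives the cancellation. This is precisely why the core feature can only grow.

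Next I would turn the exact update into the two-sided bound by controlling $\tfrac1N\sum_{i=1}^N\ell_i^{(t)}$, split as $\tfrac1N\sum_{i\in S_1}\ell_i^{(t)}+\tfrac1N\sum_{i\in S_2}\ell_i^{(t)}$. For the upper bound I would invoke Lemma~\ref{lemma:large group derivative} to replace each large-group derivative by $\tilde\Theta(g_1(t))$ and use $|S_1|/N=\hat{\alpha}$, producing the term $\tilde\Theta(1)\hat{\alpha}g_1(t)$, while leaving the small-group sum $\sum_{i\in S_2}\ell_i^{(t)}/N$ untouched. Keeping the small-group term explicit is essential: for $i\in S_2$ one has $y_i a_i=-1$, so as $\la\wb_j^{(t)},\vb_s\ra$ grows the logit of these minority, spuriously-misclassified examples stays positive and $\ell_i^{(t)}$ does not decay in step with $g_1(t)$; hence it cannot be folded into the $g_1(t)$ bound. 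For the lower bound I would simply drop the nonnegative $S_2$ contribution (using $0<\ell_i^{(t)}<1$) and again apply Lemma~\ref{lemma:large group derivative} to the $S_1$ sum to get $\tilde\Theta(1)\hat{\alpha}g_1(t)$ as a lower estimate.

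Finally I would multiply both estimates by $3\eta\beta_c^3\la\wb_j^{(t)},\vb_c\ra^2$ and absorb the absolute constants from Lemma~\ref{lemma:large group derivative} into $\tilde\Theta(\eta)$, choosing the hidden constant to be at least $1$ so that $\tilde\Theta(1)\,a+b\le\tilde\Theta(1)(a+b)$. This yields the claimed lower factor $\hat{\alpha}g_1(t)$ and upper factor $\hat{\alpha}g_1(t)+\sum_{i\in S_2}\ell_i^{(t)}/N$. I expect no genuine obstacle inside this lemma; all the analytic work has been pushed into Lemma~\ref{lemma:large group derivative}, which argues that the noise contribution $y_i\la\wb_j^{(t)},\bxi_i\ra^3$ to the large-group logit is negligible so that $\ell_i^{(t)}=\tilde\Theta(g_1(t))$ for $i\in S_1$. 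Granting that, the present statement reduces to elementary positivity and sign bookkeeping, exactly paralleling the spurious-feature proof but without the $S_2$ cancellation.
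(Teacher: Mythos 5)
Your proposal is correct and follows essentially the same route as the paper's proof: project the GD step onto $\vb_c$ via \eqref{eq:v_c grad update} to get the exact update, then obtain the lower bound by dropping the nonnegative $S_2$ sum and applying Lemma~\ref{lemma:large group derivative} to the $S_1$ sum, and the upper bound by splitting the sum into $S_1$ and $S_2$ parts with the $S_2$ term kept explicit. Your added remarks (why no $S_1$/$S_2$ sign cancellation occurs for the core direction, and why the $S_2$ derivatives cannot be absorbed into $g_1(t)$) are accurate elaborations rather than a different argument.
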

\begin{proof}
    The core feature update is obtained by using the gradient update of $\Wb^{(t)}$ and plugging in \eqref{eq:v_c grad update}:
    \begin{align*}
        \la\wb_j^{(t+1)},\vb_c\ra
        &=\la\wb_j^{(t)}-\eta \nabla_{\wb_j}\cL(\Wb^{(t)}),\vb_c\ra\\
        &=\la\wb_j^{(t)},\vb_c\ra+\frac{3\eta\beta_c^3}{N}\Big(\sum_{i=1}^N\ell_i^{(t)}\Big)\la\wb_j^{(t)},\vb_c\ra^2.
    \end{align*}    
    We prove for the lower bound, 
    \begin{align*}
        \la\wb_j^{(t+1)},\vb_c\ra
        &=\la\wb_j^{(t)},\vb_c\ra+\frac{3\eta\beta_c^3}{N}\Big(\sum_{i=1}^N\ell_i^{(t)}\Big)\la\wb_j^{(t)},\vb_c\ra^2 \\
        &\ge \la\wb_j^{(t)},\vb_c\ra+\frac{3\eta\beta_c^3}{N}\Big(\sum_{i\in S_1}\ell_i^{(t)}\Big)\la\wb_j^{(t)},\vb_c\ra^2 \\
        &\ge \la\wb_j^{(t)},\vb_c\ra+\tilde\Theta(\eta)\beta_c^3\hat{\alpha}g_1(t)\cdot \la\wb_j^{(t)},\vb_c\ra^2, 
    \end{align*}
    where the first inequality holds due to $0<\ell_i^{(t)}<1$ and the second inequality holds due to Lemma~\ref{lemma:large group derivative}. And for the upper bound. we similarly have
    \begin{align*}
        \la\wb_j^{(t+1)},\vb_c\ra
        &=\la\wb_j^{(t)},\vb_c\ra+\frac{3\eta\beta_c^3}{N}\Big(\sum_{i=1}^N\ell_i^{(t)}\Big)\la\wb_j^{(t)},\vb_c\ra^2 \\
        &\le\la\wb_j^{(t)},\vb_c\ra+\tilde\Theta(\eta)\beta_c^3\cdot\Big(\hat{\alpha}g_1(t)+\sum_{i\in S_2}\ell_i^{(t)}\Big)\cdot \la\wb_j^{(t)},\vb_c\ra^2. 
    \end{align*}
\end{proof}
Note that $\la\wb_j^{(t+1)},\vb_c\ra$ is non-decreasing from the lower bound of Lemma~\ref{lemma:core update all}. As $\wb_j^{(0)}\sim\cN(0,\sigma_0^2\Ib_d)$ are initialized with small $\sigma_0$, the sigmoid terms $\ell_{i}^{(t)}$ are large in the initial iterations. And while $l_i^{(t)}$ remains large for $i\in S_1$, we have $g_1(t)=\Theta(1)$ as similar as in \cite{jelassi2022towards}. Therefore, $\la\wb_j^{(t+1)},\vb_s\ra$ is also non-decreasing since $\hat{\alpha}\cdot\Theta(1)-\sum_{i\in S_2}l_i^{(t)}/N\ge 2\hat{\alpha}-1>0$ for $l_i^{(t)}<1$ and $\hat\alpha>1/2$. Eventually, $g_1(t)$ becomes small at a time $T_0>0$. We now consider a simplified version of the above lemma in this early training stage. 
\begin{lemma}[Spurious feature update in early iterations]
    \label{lemma:spurious update early}
    Let $T_0>0$ be such that $\max_{j\in[J]}\la\wb_j^{(T_0)},\vb_s\ra\ge\tilde\Omega(1/\beta_s)$. For $t\in[0,T_0]$, the spurious feature update has the following bound
    \begin{align*}
       \tilde\Theta(\eta)\beta_s^3(2\hat{\alpha} - 1)\cdot\la\wb_j^{(t)},\vb_s\ra^2
       \le\la\wb_j^{(t+1)},\vb_s\ra- \la\wb_j^{(t)},\vb_s\ra\le \tilde\Theta(\eta)\beta_s^3\hat{\alpha}\cdot\la\wb_j^{(t)},\vb_s\ra^2.
    \end{align*}
\end{lemma}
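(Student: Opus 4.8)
The plan is to start from the exact spurious-feature update derived in Lemma~\ref{lemma:spurious update all}, namely
\begin{align*}
\la\wb_j^{(t+1)},\vb_s\ra - \la\wb_j^{(t)},\vb_s\ra
= \tilde\Theta(\eta)\beta_s^3\Big(\hat{\alpha}g_1(t) - \tfrac{1}{N}\textstyle\sum_{i\in S_2}\ell_i^{(t)}\Big)\la\wb_j^{(t)},\vb_s\ra^2,
\end{align*}
and to show that throughout the early window $t\in[0,T_0]$ the bracketed coefficient is pinned between $2\hat\alpha-1$ and $\hat\alpha$ up to constants. The upper bound is immediate: since $g_1(t)=\mathrm{sigmoid}(\cdot)<1$ and the subtracted $S_2$-term is nonnegative, the coefficient is at most $\hat\alpha\cdot g_1(t)\le\hat\alpha$, which folds directly into the $\tilde\Theta(\eta)$ factor and yields the claimed right-hand inequality. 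So the real content is the lower bound.

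For the lower bound I would argue that during $[0,T_0]$ the model has not yet fit the data, so every logistic derivative $\ell_i^{(t)}$ is still of constant order. Concretely, because $\Wb^{(0)}\sim\cN(0,\sigma_0^2\Ib_d)$ with $\sigma_0^2=\mathrm{polylog}(d)/d$ tiny, the argument of the sigmoid in \eqref{eq:data derivative} is $o(1)$ at initialization, and $T_0=\tilde\Theta(1/(\eta\beta_s^3\sigma_0))$ is chosen precisely as the first time $\max_j\la\wb_j,\vb_s\ra$ reaches $\tilde\Omega(1/\beta_s)$; before that time the feature inner products are too small to saturate any sigmoid. This is exactly the regime flagged in the paragraph after Lemma~\ref{lemma:core update all}, where it is noted that $g_1(t)=\Theta(1)$ while the $\ell_i^{(t)}$ remain large, citing the analogous control in~\citep{jelassi2022towards}. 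I would therefore invoke that $g_1(t)=\Theta(1)$ and, using $0<\ell_i^{(t)}<1$ so that $\tfrac{1}{N}\sum_{i\in S_2}\ell_i^{(t)}\le |S_2|/N = 1-\hat\alpha$, bound the coefficient from below by $\hat\alpha\cdot\Theta(1) - (1-\hat\alpha)$. Absorbing the $\Theta(1)$ into the overall $\tilde\Theta(\eta)$ prefactor and using $\hat\alpha-(1-\hat\alpha)=2\hat\alpha-1$ gives the stated lower coefficient $\beta_s^3(2\hat\alpha-1)$, which is strictly positive since $\hat\alpha>1/2$.

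The main obstacle is making the claim ``$g_1(t)=\Theta(1)$ for all $t\le T_0$'' fully rigorous rather than asserted, because it requires simultaneously controlling the growth of \emph{both} $\la\wb_j^{(t)},\vb_c\ra$ and $\la\wb_j^{(t)},\vb_s\ra$ inside the sigmoid argument: one must verify that neither feature inner product becomes large enough to push $g_1$ off constant order before $T_0$. The cleanest route is an induction on $t$ — maintain the invariant that both $\beta_c^3\la\wb_j^{(t)},\vb_c\ra^3$ and $\beta_s^3\la\wb_j^{(t)},\vb_s\ra^3$ stay $\tilde O(1)$ for every $j$ over the window, which keeps the sigmoid argument bounded and hence $g_1(t)=\Theta(1)$; then feed this back into the update recursion to confirm the inner products do not escape the window until $t=T_0$. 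One subtlety I would flag is that the subtracted term genuinely uses the \emph{uniform} bound $\ell_i^{(t)}<1$ on the $S_2$ group rather than the $\Theta(1)$ lower bound, so the two sides of the bracket are controlled asymmetrically; this asymmetry is what produces the gap between the $2\hat\alpha-1$ and $\hat\alpha$ coefficients and must be tracked carefully. The positivity $2\hat\alpha-1>0$ then certifies that $\la\wb_j^{(t)},\vb_s\ra$ is non-decreasing, which is what licenses treating $T_0$ as a genuine first-hitting time for $\max_j\la\wb_j^{(t)},\vb_s\ra\ge\tilde\Omega(1/\beta_s)$ and closes the argument.
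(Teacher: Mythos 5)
Your proposal is correct and follows essentially the same route as the paper's own proof: the upper bound comes from $g_1(t)\le 1$ applied to Lemma~\ref{lemma:spurious update all}, and the lower bound from establishing $g_1(t)=\Theta(1)$ on $[0,T_0]$ --- which the paper enforces by defining $T_0$ as the first time \emph{either} feature inner product reaches its threshold, precisely the invariant your induction would maintain, with the ``core stays small'' half deferred to Lemma~\ref{lemma:core upper bounded} --- combined with $\sum_{i\in S_2}\ell_i^{(t)}/N\le 1-\hat{\alpha}$. The asymmetry you flag (a $\Theta(1)$ lower bound on $g_1(t)$ against the crude bound $\ell_i^{(t)}<1$ for the $S_2$ term, yielding the $2\hat{\alpha}-1$ versus $\hat{\alpha}$ coefficients) is exactly how the paper argues as well, so there is no substantive divergence.
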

\begin{proof}
    Let  $T_0>0$ be such that either $\max_{j\in[J]}\la\wb_j^{(T_0)},\vb_s\ra\ge\tilde\Omega(1/\beta_s)$ or $\max_{j\in[J]}\la\wb_j^{(T_0)},\vb_c\ra\ge\tilde\Omega(1/\beta_c)$. We will show later that the first condition will be first met and we have $\la\wb_j^{(t)},\vb_c\ra\le\tilde\Omega(1/\beta_c)$ for all $j\in[J]$ and $t\in[0,T_0]$. 
    
    Recall that  $g_1(t)=\text{sigmoid}\big(-\sum_{j\in[J]}(\beta_c^3 \la\wb_j^{(t)},\vb_c\ra^3+\beta_s^3 \la\wb_j^{(t)},\vb_s\ra^3)\big)$. Then, for $t\in[0,T_0]$, we have
    \begin{align*}
        g_1(t) &= \frac{1}{1+\exp\big(\sum_{j\in[J]}(\beta_c^3 \la\wb_j^{(t)},\vb_c\ra^3+\beta_s^3 \la\wb_j^{(t)},\vb_s\ra^3)\big)} \\
        &\ge \frac{1}{1+\exp\big(\kappa + \kappa\big)} \\
        &= \frac{1}{1+\exp\big(\tilde{\Omega}(1)\big)}, 
    \end{align*}
    where the first inequality holds due to $\la\wb_s^{(t)},\vb_s\ra\le\kappa/(J^{1/3}\beta_s)$ and $\la\wb_s^{(t)},\vb_c\ra\le\kappa/(J^{1/3}\beta_c)$ for $t\in[0,T_0]$. Therefore, similar to \cite{jelassi2022towards}, we have $g_1(t)=\Theta(1)$ in the early iterations. Moreover, as $0<\ell_i^{(t)}<1$, we have $\sum_{i\in S_2}\ell_i^{(t)}/N < 1-\hat{\alpha}$. This implies the result in Lemma~\ref{lemma:spurious update all} as
     \begin{align*}
       \tilde\Theta(\eta)\beta_s^3(2\hat{\alpha} - 1)\la\wb_j^{(t)},\vb_s\ra^2
       \le\la\wb_j^{(t+1)},\vb_s\ra- \la\wb_j^{(t)},\vb_s\ra\le \tilde\Theta(\eta)\beta_s^3\hat{\alpha}\la\wb_j^{(t)},\vb_s\ra^2.
    \end{align*}
\end{proof}
And similarly, for the core feature, we have
\begin{lemma}[Core feature update in early iterations]
    \label{lemma:core update early}
    Let $T_0>0$ be such that $\max_{j\in[J]}\la\wb_j^{(T_0)},\vb_s\ra\ge\tilde\Omega(1/\beta_s)$. For $t\in[0,T_0]$, the core feature update has the following bound
    \begin{align*}
       \tilde\Theta(\eta)\beta_c^3\hat{\alpha} \cdot\la\wb_j^{(t)},\vb_c\ra^2
       \le\la\wb_j^{(t+1)},\vb_c\ra- \la\wb_j^{(t)},\vb_c\ra\le \tilde\Theta(\eta)\beta_c^3\cdot\la\wb_j^{(t)},\vb_c\ra^2.
    \end{align*}
\end{lemma}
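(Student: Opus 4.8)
The plan is to mirror the argument already used for the spurious feature in \cref{lemma:spurious update early}, now specializing the two-sided bound of \cref{lemma:core update all} to the early window $t\in[0,T_0]$. Recall that \cref{lemma:core update all} already supplies
\[
\tilde\Theta(\eta)\beta_c^3\cdot\hat{\alpha}g_1(t)\cdot \la\wb_j^{(t)},\vb_c\ra^2 \le \la\wb_j^{(t+1)},\vb_c\ra- \la\wb_j^{(t)},\vb_c\ra \le \tilde\Theta(\eta)\beta_c^3\cdot\Big(\hat{\alpha}g_1(t)+\sum_{i\in S_2}\ell_i^{(t)}/N\Big)\cdot \la\wb_j^{(t)},\vb_c\ra^2,
\]
so the entire task reduces to controlling the two scalar prefactors $\hat\alpha g_1(t)$ and $\hat\alpha g_1(t)+\sum_{i\in S_2}\ell_i^{(t)}/N$ and showing both are $\Theta(1)$ on $[0,T_0]$.

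First I would reuse the estimate $g_1(t)=\Theta(1)$ established inside the proof of \cref{lemma:spurious update early}. The key input there is that throughout the early window both feature correlations stay below threshold, $\la\wb_j^{(t)},\vb_s\ra\le\kappa/(J^{1/3}\beta_s)$ and $\la\wb_j^{(t)},\vb_c\ra\le\kappa/(J^{1/3}\beta_c)$, so the exponent inside $g_1(t)=\text{sigmoid}\big(-\sum_{j\in[J]}(\beta_c^3\la\wb_j^{(t)},\vb_c\ra^3+\beta_s^3\la\wb_j^{(t)},\vb_s\ra^3)\big)$ is $\tilde\Theta(1)$ and hence $g_1(t)=\Theta(1)$. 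Since $\hat\alpha\in(1/2,1)$, the lower prefactor satisfies $\hat\alpha g_1(t)=\Theta(1)$, and absorbing this constant into $\tilde\Theta(\eta)\beta_c^3$ yields the claimed lower bound.

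For the upper bound I would additionally bound the extra $S_2$ contribution: because $0<\ell_i^{(t)}<1$, we have $\sum_{i\in S_2}\ell_i^{(t)}/N<|S_2|/N=1-\hat\alpha<1$, so the full prefactor obeys $\hat\alpha g_1(t)+\sum_{i\in S_2}\ell_i^{(t)}/N\le \hat\alpha\cdot\Theta(1)+(1-\hat\alpha)=\Theta(1)$. Substituting this constant into the right-hand side of \cref{lemma:core update all} and again folding it into $\tilde\Theta(\eta)\beta_c^3$ gives the stated upper inequality. Everything here is direct substitution of results proved earlier.

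The only genuinely delicate point is the well-posedness of $T_0$: the estimate $g_1(t)=\Theta(1)$ requires the core correlation to remain $\tilde\cO(1/\beta_c)$ over the \emph{entire} window, which is precisely the assertion that the spurious threshold $\tilde\Omega(1/\beta_s)$ is reached before the core threshold $\tilde\Omega(1/\beta_c)$. I would justify this ordering by comparing the two growth recursions: the spurious sequence of \cref{lemma:spurious update early} and the core sequence whose lower bound we just derived both follow $x_{t+1}=x_t+\tilde\Theta(\eta)\beta^3\Theta(1)\,x_t^2$, but with $\beta_s\gg\beta_c$ and coefficient $2\hat\alpha-1>0$. A tensor-power-method comparison then forces the spurious sequence to blow up strictly first, so $\la\wb_j^{(t)},\vb_c\ra$ stays small—and thus $g_1(t)=\Theta(1)$—throughout $[0,T_0]$. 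I expect no computational obstacle beyond this consistency check, since the rest is a verbatim analogue of the spurious case.
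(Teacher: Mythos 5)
Your proposal is correct and takes essentially the same route as the paper: it specializes \cref{lemma:core update all} to $t\in[0,T_0]$ using the two facts already established in the proof of \cref{lemma:spurious update early}, namely $g_1(t)=\Theta(1)$ and $\sum_{i\in S_2}\ell_i^{(t)}/N<1-\hat\alpha$, exactly as the paper does. Your ``delicate point'' about the well-posedness of $T_0$ (that the spurious threshold $\tilde\Omega(1/\beta_s)$ is reached before the core threshold $\tilde\Omega(1/\beta_c)$) is likewise handled in the paper the same way you suggest --- by defining $T_0$ as the first time either threshold is hit and then invoking a tensor-power comparison, carried out in \cref{lemma:core upper bounded} via \cref{lemma:tensor power update}.
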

\begin{proof}
    Let  $T_0>0$ be such that either $\max_{j\in[J]}\la\wb_j^{(T_0)},\vb_s\ra\ge\tilde\Omega(1/\beta_s)$ or $\max_{j\in[J]}\la\wb_j^{(T_0)},\vb_c\ra\ge\tilde\Omega(1/\beta_c)$. Again, with $g_1(t)=\Theta(1)$ and $\sum_{i\in S_2}\ell_i^{(t)}/N < 1-\hat{\alpha}$ as shown in Lemma~\ref{lemma:spurious update early}, we can imply the result in Lemma~\ref{lemma:core update all} as 
    \begin{align*}
       \tilde\Theta(\eta)\beta_c^3\hat{\alpha} \cdot\la\wb_j^{(t)},\vb_c\ra^2
       \le\la\wb_j^{(t+1)},\vb_c\ra- \la\wb_j^{(t)},\vb_c\ra\le \tilde\Theta(\eta)\beta_c^3\cdot\la\wb_j^{(t)},\vb_c\ra^2, 
    \end{align*}
    which completes the proof. 
\end{proof}
With the updates of the spurious and core feature in the early iterations, we can now show with the following lemma that GD will learn the spurious feature very quickly while hardly learning the core feature.
\begin{lemma}\label{lemma:core upper bounded}
    Let $T_0$ be the iteration number that $\max_{j\in[J]}\la\wb_j^{(t)},\vb_s\ra$ reaches $\tilde\Omega(1/\beta_s)=\tilde\Theta(1)$. Then, we have for all $t\le T_0$, it holds that $\max_{j\in[J]}\la\wb_j^{(t)},\vb_c\ra=\tilde{O}(\sigma_0)$.
\end{lemma}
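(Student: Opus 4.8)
The plan is to treat the two per-neuron scalar recursions supplied by Lemma~\ref{lemma:spurious update early} and Lemma~\ref{lemma:core update early} as instances of the tensor-power recursion $x_{t+1}=x_t+Cx_t^2$ and to compare their growth times. First I would record the behaviour at initialization: since $\wb_j^{(0)}\sim\cN(0,\sigma_0^2\Ib_d)$ and $\|\vb_s\|_2=\|\vb_c\|_2=1$, each projection $\la\wb_j^{(0)},\vb_s\ra$ and $\la\wb_j^{(0)},\vb_c\ra$ is a one-dimensional Gaussian with variance $\sigma_0^2$. By standard Gaussian maximal concentration over the $J$ neurons, with high probability $\max_{j\in[J]}\la\wb_j^{(0)},\vb_s\ra=\tilde\Theta(\sigma_0)$ and $\max_{j\in[J]}\la\wb_j^{(0)},\vb_c\ra=\tilde\Theta(\sigma_0)$, while every projection has magnitude at most $\tilde{O}(\sigma_0)$. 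Because the increments in both recursions are nonnegative (the spurious increment is positive thanks to $2\hat\alpha-1>0$, and the core increment is positive), negative projections only drift toward zero, so $\max_{j}$ is always attained by a neuron whose projection started positive; this lets me track a single scalar sequence for each feature.

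Next I would establish the elementary blow-up estimate for $x_{t+1}=x_t+Cx_t^2$ with $0<x_0\ll 1$: comparing with the ODE $\dot x=Cx^2$ (equivalently summing $\tfrac{x_{t+1}-x_t}{x_t^2}=C$) shows that the number of steps to double and the number of steps to reach any fixed $\Theta(1)$-type threshold are both $\tilde\Theta(1/(Cx_0))$. Applying the lower bound in Lemma~\ref{lemma:spurious update early}, whose coefficient is $\tilde\Theta(\eta)\beta_s^3(2\hat\alpha-1)$, to the maximal spurious neuron starting from $\tilde\Theta(\sigma_0)$ yields that the spurious maximum reaches the target $\tilde\Omega(1/\beta_s)$ within $T_0\le\tilde{O}\big(1/(\eta\beta_s^3(2\hat\alpha-1)\sigma_0)\big)=\tilde\Theta\big(1/(\eta\beta_s^3\sigma_0)\big)$ iterations, treating $2\hat\alpha-1=\Theta(1)$; the matching two-sided bounds also make $T_0$ well defined at order $\tilde\Theta(1/(\eta\beta_s^3\sigma_0))$.

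Finally I would bound the core growth over $[0,T_0]$ using the upper bound of Lemma~\ref{lemma:core update early}, whose coefficient is $\tilde\Theta(\eta)\beta_c^3$, via a bootstrapping induction. Assuming $\max_j\la\wb_j^{(t)},\vb_c\ra\le 2\max_j\la\wb_j^{(0)},\vb_c\ra=\tilde{O}(\sigma_0)$ up to some $t\le T_0$, the cumulative core increase is at most $\sum_{\tau=0}^{t}\tilde\Theta(\eta\beta_c^3)\cdot(\tilde{O}(\sigma_0))^2\le \tilde\Theta(\eta\beta_c^3)\cdot T_0\cdot\tilde{O}(\sigma_0^2)=\tilde\Theta(\beta_c^3/\beta_s^3)\cdot\tilde{O}(\sigma_0)$. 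Since $\beta_c\ll\beta_s$ forces $\beta_c^3/\beta_s^3\ll 1$, this increase is $o(\sigma_0)$, which closes the induction and gives $\max_{j\in[J]}\la\wb_j^{(t)},\vb_c\ra=\tilde{O}(\sigma_0)$ for all $t\le T_0$. Equivalently, the core doubling time $\tilde\Theta(1/(\eta\beta_c^3\sigma_0))$ exceeds $T_0$ by the factor $\beta_s^3/\beta_c^3\gg 1$, so the core barely moves before the spurious feature saturates, and in particular it never reaches $\tilde\Omega(1/\beta_c)$ first, justifying the claim deferred in the proof of Lemma~\ref{lemma:spurious update early}.

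The main obstacle I anticipate is making the per-neuron decoupling and the self-consistency clean: the early-stage recursions are only valid while $g_1(t)=\Theta(1)$ and while the core stays below its threshold, so the induction bounding the core must be carried out jointly with the constraints $t\le T_0$ and $g_1(t)=\Theta(1)$, rather than invoked as an independent fact. The discrete-to-continuous passage for the tensor-power recursion and the Gaussian maximal bounds are routine, but the careful sequencing of the bootstrap over $[0,T_0]$ is where the argument must be watertight.
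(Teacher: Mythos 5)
Your proposal is correct, and it shares the paper's overall skeleton: both arguments run the two early-iteration recursions (Lemma~\ref{lemma:spurious update early} for the spurious feature, Lemma~\ref{lemma:core update early} for the core feature) side by side, starting from comparable Gaussian initializations $\la\wb_j^{(0)},\vb_s\ra,\la\wb_j^{(0)},\vb_c\ra\sim\cN(0,\sigma_0^2)$. Where you genuinely diverge is in the key technical device. The paper finishes in one stroke by invoking the tensor-power comparison lemma of \citealt{allen2020towards} (Lemma~\ref{lemma:tensor power update}): taking $x_t$ to be the maximal spurious projection and $y_t$ a core projection, with $A=\tilde\Theta(1)\beta_s^3(2\hat{\alpha}-1)=\Theta(1)$ and $B=\tilde\Theta(1)\beta_c^3=o(1)$, that lemma simultaneously delivers $T_0\eta=\Theta(\sigma_0^{-1})$ and $y_{T_0}\le O(x_0)=\tilde{O}(\sigma_0)$. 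You instead re-derive the special case needed here from scratch: first the dyadic/ODE estimate $T_0\le\tilde{O}\big(1/(\eta\beta_s^3(2\hat{\alpha}-1)\sigma_0)\big)$, then the bootstrap showing that while every core projection stays at scale $\tilde{O}(\sigma_0)$, its cumulative increase over $T_0$ steps is at most $\tilde\Theta(\eta\beta_c^3)\cdot T_0\cdot\tilde{O}(\sigma_0^2)=\tilde\Theta(\beta_c^3/\beta_s^3)\cdot\tilde{O}(\sigma_0)$, which is a vanishing fraction of the initialization scale and closes the induction. This crude rate-times-time bound suffices precisely because the lemma only asks that the core \emph{remain at initialization scale}, not that its growth be tracked up to any threshold, so the phase-by-phase analysis hidden inside the imported lemma can be skipped. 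What each route buys: the paper's is shorter and hands over the two-sided $T_0$ estimate that the proof of the restated theorem reuses (via Lemma~\ref{lemma:tensor power update same seq}); yours is self-contained, dispenses with the side condition $\eta<O(x_0)$ of the imported lemma, and makes the mechanism transparent --- the core's doubling time exceeds $T_0$ by the factor $\beta_s^3/\beta_c^3\gg 1$. Both arguments rest on the same fine print, namely $\beta_c^3=o(1)$, $\beta_s^3(2\hat{\alpha}-1)=\Theta(1)$, and enough separation $\beta_c^3/\beta_s^3\le 1/\mathrm{polylog}(d)$ to absorb the hidden logarithmic factors; and both resolve the apparent circularity identically, defining $T_0$ as the first time \emph{either} threshold is reached and concluding a posteriori that the spurious one is reached first --- a sequencing point your final paragraph handles correctly.
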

\begin{proof}
    Consider the following from Lemma~\ref{lemma:spurious update early} and Lemma~\ref{lemma:core update early},
    \begin{align*}
        &\la\wb_j^{(t+1)},\vb_c\ra- \la\wb_j^{(t)},\vb_c\ra\le \tilde\Theta(\eta)\beta_c^3\cdot\la\wb_j^{(t)},\vb_c\ra^2\\
        &\la\wb_j^{(t+1)},\vb_s\ra- \la\wb_j^{(t)},\vb_s\ra\ge \tilde\Theta(\eta)\beta_s^3(2\hat{\alpha} - 1)\la\wb_j^{(t)},\vb_s\ra^2.
    \end{align*}
    Recall that we initialize the weights as $\wb_j^{(0)}\sim\cN(\zero,\sigma_0^2)$. We have $\la\wb_j^{(0)},\vb_c\ra\sim\cN(0,\sigma_0^2)$ and $\la\wb_j^{(0)},\vb_s\ra\sim\cN(0,\sigma_0^2)$. For the weights have small initialization with $\sigma_0=\polylog(d)/d$, we have $O(\la\wb_j^{(0)},\vb_c\ra)=O(\la\wb_j^{(0)},\vb_s\ra)$. Therefore, for $\beta_c^3=o(1)$ and $\beta_s^3(2\hat{\alpha}-1)=\Theta(1)$, we call Lemma~\ref{lemma:tensor power update} and get 
    \[\la\wb_j^{(T_0)},\vb_c\ra\le O(\la\wb_j^{(0)},\vb_s\ra)=\tilde{O}(\sigma_o)\]
    for all $j\in[J]$. 
\end{proof}
Given the above lemma, we can conclude that the condition $\max_{j\in[J]}\la\wb_j^{(T_0)},\vb_s\ra\ge\tilde\Omega(1/\beta_s)$ will be first met. And therefore, $T_0$ is such that $\max_{j\in[J]}\la\wb_j^{(T_0)},\vb_s\ra\ge\tilde\Omega(1/\beta_s)$. 
\begin{theorem}[Restatement of Theorem~\ref{lemma:stage1 spurous}]
    Consider the training dataset $S=\{(\xb_i,y_i)\}_{i=1}^{N}$ that follows the distribution in Definition~\ref{def:data_distribution}. Consider the two-layer nonlinear CNN model as in~\eqref{eq:model} initialized with $\Wb^{(0)}\sim \cN(0,\sigma_0^2)$. After training with GD in \eqref{eq:GD update} for $T_0=\tilde{\Theta}\big(1/(\eta\beta_s^3\sigma_0)\big)$ iterations, for all $j\in [J]$ and $t\in[0,T_0)$, we have 
    \begin{align}
        \tilde\Theta(\eta)\beta_s^3(2\hat{\alpha} - 1)\cdot\la\wb_j^{(t)},\vb_s\ra^2
        &\le\la\wb_j^{(t+1)},\vb_s\ra- \la\wb_j^{(t)},\vb_s\ra\le \tilde\Theta(\eta)\beta_s^3\hat{\alpha}\cdot\la\wb_j^{(t)},\vb_s\ra^2,\label{eq:spurious bounds app}\\
        \tilde\Theta(\eta)\beta_c^3\hat{\alpha} \cdot\la\wb_j^{(t)},\vb_c\ra^2
        &\le\la\wb_j^{(t+1)},\vb_c\ra- \la\wb_j^{(t)},\vb_c\ra\le \tilde\Theta(\eta)\beta_c^3\cdot\la\wb_j^{(t)},\vb_c\ra^2\label{eq:core bounds app}.
    \end{align} 
    After training for $T_0$ iterations, with high probability, the learned weight has the following properties: (1) it learns the spurious feature $\vb_s$: $\max_{j\in[J]}\la\wb_j^{(T)},\vb_s\ra \ge \tilde{\Omega}(1/\beta_s)$; (2) it does not learn the core feature $\vb_c$: $\max_{j\in[J]}\la\wb_j^{(T)},\vb_c\ra = \tilde{\cO}(\sigma_0)$. 
\end{theorem}
\begin{proof}
    The updates directly follow the results from Lemma~\ref{lemma:spurious update all} and Lemma~\ref{lemma:core update all}. And the result for $\max_{j\in[J]}\la\wb_j^{(t)},\vb_c\ra$ follows Lemma~\ref{lemma:core upper bounded}. It remains to calculate the time $T_0$. 
    With Lemma~\ref{lemma:tensor power update same seq}, we consider the sequence for $\max_{j\in[J]}\la\wb_j^{(t+1)},\vb_s\ra$, where by Lemma~\ref{lemma:spurious update early}, 
    \begin{align*}
        &\la\wb_j^{(t+1)},\vb_s\ra \le \la\wb_j^{(t)},\vb_s\ra + \tilde\Theta(\eta)\beta_s^3\hat{\alpha}\cdot\la\wb_j^{(t)},\vb_s\ra^2, \\
        &\la\wb_j^{(t+1)},\vb_s\ra \ge \la\wb_j^{(t)},\vb_s\ra + \tilde\Theta(\eta)\beta_s^3(2\hat{\alpha}-1)\cdot\la\wb_j^{(t)},\vb_s\ra^2.
    \end{align*}
    As $\la\wb_j^{(t)},\vb_s\ra$ is non-decreasing in early iterations and with high probability, there exists an index $j$ such that $\la\wb_j^{(0)},\vb_s\ra\ge 0$. Among all the possible indices $i\in[J]$ that are initialized to have positive inner product with $\vb_s$, we focus on the max index  $r=\argmax_{j\in[J]}\la\wb_j^{(0)},\vb_s\ra$. Then with $v=\tilde\Theta(1/\beta_s)$ in Lemma~\ref{lemma:tensor power update same seq}, we will have $T_0$ as
    \[T_0=\frac{\tilde\Theta(1)}{\eta\alpha^3\sigma_0}+\frac{\tilde\Theta(1)\hat{\alpha}}{2\hat{\alpha}-1}\bigg\lceil\frac{-\log\big(\sigma_0\beta_s\big)}{\log(2)}\bigg\rceil.\]
\end{proof}

\section{Proof of Lemma~\ref{lemma:motivation}}\label{appendix:motivation}
\begin{lemma}[Restatement of Lemma~\ref{lemma:motivation}]
   Given the balanced training dataset $S^0=\{(\xb_i,y_i,a_i)\}_{i=1}^{N_0}$ with $\hat{\alpha}=1/2$ as in Definition~\ref{def:data_distribution} and CNN as in \eqref{eq:model}. The gradient on $\vb_s$ will be $0$ from the beginning of training.
\end{lemma}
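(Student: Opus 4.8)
The plan is to reduce the claim to the single scalar identity that makes the $\vb_s$-component of the gradient vanish. By the spurious projection formula \eqref{eq:v_s grad update}, for every neuron $j$,
\[
\la\nabla_{\wb_j}\cL(\Wb^{(t)}),\vb_s\ra=-\frac{3\beta_s^3}{N_0}\Big(\sum_{i\in S_1}\ell_i^{(t)}-\sum_{i\in S_2}\ell_i^{(t)}\Big)\la\wb_j^{(t)},\vb_s\ra^2,
\]
so it suffices to show that the two group-wise sums of per-example loss derivatives agree, $\sum_{i\in S_1}\ell_i^{(t)}=\sum_{i\in S_2}\ell_i^{(t)}$. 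The balanced hypothesis $\hat\alpha=1/2$ supplies the structural input $|S_1|=|S_2|=N_0/2$, and the whole argument is about turning this equal-cardinality condition into equality of the two sums.

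First I would expand each derivative through \eqref{eq:data derivative}, writing the logit of example $i$ as a common core part $A^{(t)}=\beta_c^3\sum_j\la\wb_j^{(t)},\vb_c\ra^3$, a noise part $C_i^{(t)}=y_i\sum_j\la\wb_j^{(t)},\bxi_i\ra^3$, and a spurious part $y_ia_i\beta_s^3\sum_j\la\wb_j^{(t)},\vb_s\ra^3$ whose sign is exactly $+1$ on $S_1$ (where $a_i=y_i$) and $-1$ on $S_2$ (where $a_i=-y_i$). Thus the only systematic difference between the two groups is the sign in front of the spurious logit: $S_1$ pushes $\la\wb_j,\vb_s\ra$ up and $S_2$ pushes it down by the same functional form, and under $|S_1|=|S_2|$ these two contributions are poised to cancel. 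This is the sense in which a balanced subset removes the drift on $\vb_s$ that drove Theorem~\ref{lemma:stage1 spurous}.

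The hard part is making the cancellation genuine rather than heuristic, and this is where I expect the real work to lie. Exact equality of the two sums fails if one keeps the spurious feedback term, since the group sums then differ by an amount controlled by $\beta_s^3\sum_j\la\wb_j^{(t)},\vb_s\ra^3$; at initialization this residual is only $\tilde O(\sigma_0^3)$, so the clean conclusion requires treating it as second order against the $\tilde O(\sigma_0)$ scale at which $\la\wb_j,\vb_s\ra$ is tracked. Similarly, the noise parts $C_i^{(t)}$ do not match example-by-example, but because the Gaussian noise is independent of group membership and the labels are balanced within each group, their group sums coincide in the noiseless idealization and, with noise reinstated, agree in expectation up to high-probability fluctuations of lower order. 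I would therefore establish the statement as an exact identity in the leading-order / noiseless model and, in the general case, as a bound showing the $\vb_s$-gradient is $0$ up to a negligible $\tilde O(\sigma_0^3)$ term; an induction on $t$ then keeps $\la\wb_j^{(t)},\vb_s\ra$ pinned at its $\tilde O(\sigma_0)$ initialization, which keeps the spurious logit negligible and the cancellation in force at every step, so the balanced warm-up never learns $\vb_s$.
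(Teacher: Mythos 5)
Your proposal is correct and takes essentially the same route as the paper: both reduce the claim via the projection formula \eqref{eq:v_s grad update} to the cancellation $\sum_{i\in S_1}\ell_i^{(t)}=\sum_{i\in S_2}\ell_i^{(t)}$ forced by $|S_1|=|S_2|$ when $\hat{\alpha}=1/2$. If anything, you are more careful than the paper, whose proof simply replaces every $\ell_i^{(t)}$ by a common $\Theta(1)$ factor in the initial iterations to conclude the gradient equals $\Theta(\beta_s^3/N)\,(|S_1|-|S_2|)=0$, whereas you explicitly track the $\tilde{O}(\sigma_0^3)$ spurious-logit and noise residuals that this step sweeps under the rug and propose an induction to keep them negligible.
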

\begin{proof}
    With Lemma~\ref{lemma:grad}, the projection of the gradient on $\vb_s$ in the initial iteration ($t<T_0$) is
    \begin{align*}
    \la\nabla_{\wb_j}\cL(\Wb^{(t)}),\vb_s\ra&=-\frac{3\beta_s^3}{N}\Big(\sum_{i\in S_1}\ell_i^{(t)}-\sum_{i\in S_2}\ell_i^{(t)}\Big)\cdot\la\wb_j,\vb_s\ra^2\\
    &= \Theta\bigg(\frac{\beta_s^3}{N}\bigg) \Big(|S_1|-|S_2|\Big)\\
    &=0,
    \end{align*}
    where the first equality is due to $\ell_i^{(t)}=\Theta(1)$ in the initial iterations and the second equality is due to $\hat{\alpha}=0.5$. 
\end{proof}

\section{Auxiliary Lemmas}
\begin{lemma}[Lemma C.20, \citealt{allen2020towards}]\label{lemma:tensor power update}
Let $\{x_{t},y_{t}\}_{t=1,..}$ be two positive sequences that satisfy
\begin{align*}
    x_{t+1}&\geq x_{t}+ \eta \cdot A x_{t}^{2},\\
    y_{t+1}&\leq y_{t}+ \eta\cdot B y_{t}^{2},
\end{align*}
for some $A=\Theta(1)$ and $B=o(1)$. Suppose $y_0=O(x_0)$ and $\eta<O(x_0)$, and for all $C\in[X_0,O(1)]$, let $T_x$ be the first iteration such that $x_t\ge C$. Then, we have $T_x\eta=\Theta(x_0^{-1})$ and
\[y_{T_x}\le O(x_0).\]
\end{lemma}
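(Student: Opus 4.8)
The plan is to exploit the gap between the two quadratic recursions governing $\{x_t\}$ and $\{y_t\}$: the $x$-sequence carries a coefficient $A=\Theta(1)$ and therefore blows up to a constant in $\Theta(1/(\eta x_0))$ steps (a discrete analogue of the finite-time explosion of $\dot x=\eta Ax^2$), whereas the $y$-sequence carries a vanishing coefficient $B=o(1)$ and can accumulate only $o(x_0)$ of extra mass over that same horizon. I would first pin down $T_x$ and then feed this bound into a telescoping estimate for $y_{T_x}$.

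To characterize $T_x$, I would track the reciprocal $1/x_t$. From $x_{t+1}\ge x_t(1+\eta A x_t)$ one obtains $1/x_{t+1}\le (1/x_t)(1+\eta A x_t)^{-1}\le 1/x_t-\eta A+O(\eta^2 A^2 x_t)$, where the remainder is controlled because $\eta=O(x_0)=o(1)$ and $x_t$ stays bounded by a constant for $t<T_x$. Summing this telescoping inequality gives $1/x_t\le 1/x_0-\Theta(\eta A t)$, so the reciprocal is driven down to a constant and $x_t$ reaches $C=\Theta(1)$ after at most $T_x=O(1/(\eta A x_0))=O(1/(\eta x_0))$ steps. The matching lower bound on $T_x$ (needed for the stated $\Theta$) comes from the complementary two-sided control of the increments available in the application, giving $1/x_{T_x}\ge 1/x_0-\Theta(\eta A T_x)$ and hence $T_x=\Omega(1/(\eta x_0))$; together these yield $T_x\eta=\Theta(x_0^{-1})$.

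For the slow sequence I would telescope $y_{t+1}\le y_t+\eta B y_t^2$ into $y_t\le y_0+\eta B\sum_{s<t}y_s^2$ and run a bootstrap. Setting $M:=2\max(y_0,x_0)=O(x_0)$ (using $y_0=O(x_0)$), suppose inductively that $y_s\le M$ for all $s\le t\le T_x$. Then $y_t\le y_0+\eta B M^2 T_x$, and substituting $T_x=O(1/(\eta x_0))$ together with $M^2=O(x_0^2)$ yields $y_t\le y_0+O(Bx_0)$. Because $B=o(1)$ the added term is $o(x_0)$; combined with $y_0\le M/2$ and $x_0\le M/2$ this gives $y_t\le M$, so the induction closes. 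Evaluating at $t=T_x$ gives $y_{T_x}\le M=O(x_0)$, which is the claim.

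The main obstacle is sequencing and uniformity: the bootstrap for $y$ needs the upper bound $T_x=O(1/(\eta x_0))$ already in hand and the hypothesis $y_s\le M$ to hold uniformly across the whole window $[0,T_x]$, which is precisely why I isolate the $x$-analysis first. A secondary technical point is controlling the discrete-to-continuous discrepancy in the reciprocal telescoping, where the $O(\eta^2 A^2 x_t)$ remainder must be summably small; this is guaranteed by $\eta=O(x_0)$ and the boundedness of $x_t$ before $T_x$. Getting the tight two-sided $\Theta$ for $T_x$ (as opposed to the upper bound that the downstream conclusion actually consumes) is the step that demands the most care, since the stated hypothesis only lower-bounds the increments of $x$.
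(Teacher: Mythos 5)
The paper never proves this lemma---it is imported verbatim (as Lemma C.20 of \citealt{allen2020towards}) and stated in the auxiliary-lemma appendix without proof---so your reconstruction can only be judged on its own terms, and on those terms it is essentially right. The reciprocal telescoping $1/x_{t+1}\le 1/x_t-\eta A+O(\eta^2A^2x_t)$, summed while $x_t<C=O(1)$, correctly forces $T_x\eta=O(x_0^{-1})$; and the bootstrap with $M=2\max(x_0,y_0)$, which turns $y_t\le y_0+\eta BM^2T_x$ into $y_t\le y_0+O(Bx_0)=y_0+o(x_0)\le M$, closes the induction and yields $y_{T_x}\le O(x_0)$. These two facts are exactly what the paper consumes downstream: Lemma~\ref{lemma:core upper bounded} uses only the conclusion $y_{T_x}=O(x_0)$, i.e.\ $\la\wb_j^{(T_0)},\vb_c\ra=\tilde{O}(\sigma_0)$, to show the core feature stays at initialization scale while the spurious feature grows to $\tilde{\Omega}(1/\beta_s)$. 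Your closing caveat is also correct and worth stating more bluntly: the $\Omega$ half of $T_x\eta=\Theta(x_0^{-1})$ is not provable from the hypotheses as quoted, since they only lower-bound the increments of $x_t$---nothing rules out $x_1\ge C$, i.e.\ $T_x=1$. A two-sided bound requires a matching upper recursion $x_{t+1}\le x_t+\eta A'x_t^2$ with $A'=\Theta(1)$, which is precisely the extra hypothesis in the two-sided variant the paper also quotes (Lemma~\ref{lemma:tensor power update same seq}, Lemma K.15 of \citealt{jelassi2022towards}) and which the paper's own update bounds (Lemma~\ref{lemma:spurious update early}) supply in the application. So the one gap you leave open is inherited from the lemma statement itself rather than from your argument, and your choice to prove the $O$ direction rigorously and defer the $\Omega$ direction to the application is the defensible one.
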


\begin{lemma}[Lemma K.15, \citealt{jelassi2022towards}]\label{lemma:tensor power update same seq}
Let $\{z_t\}_{t=0}^T$ be a positive sequence defined by the following recursions
\begin{align*}
    z_{t+1} \ge z_t+m(z_t)^2,\\
    z_{t+1} \le z_t+M(z_t)^2,
\end{align*}
where $z_0>0$ is the initialization and $m,M>0$ are some constants. Let $v>0$ such that $z_0\le v$. Then, the time $t_0$ such that $z_t\ge v$ for all $t\ge t_0$ is
\[t_0=\frac{3}{mz_0}+\frac{8M}{m}\bigg\lceil\frac{\log(v/z_0)}{\log(2)}\bigg\rceil.\]
\end{lemma}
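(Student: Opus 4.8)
The plan is to run a dyadic doubling argument on the geometrically spaced levels $c_k := 2^k z_0$ for $k=0,1,\dots,K$, where $K=\lceil \log_2(v/z_0)\rceil$ so that $c_K \ge v$. The guiding intuition is that the quadratic lower recursion $z_{t+1}\ge z_t+m z_t^2$ forces the per-step increment to be at least $m c_k^2$ while $z_t$ remains in the band $[c_k,c_{k+1})$; since the width of that band is only $c_k$, the sequence cannot linger there for more than about $1/(m c_k)$ steps. These per-band step counts shrink geometrically in $k$, so their sum is dominated by the first band (giving the $1/(m z_0)$ term), while the discreteness of the iteration contributes at most one extra step per band (giving the $\log(v/z_0)$ term).

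First I would record monotonicity: because $z_t>0$ and $m>0$, the lower recursion gives $z_{t+1}\ge z_t+m z_t^2>z_t$, so $\{z_t\}$ is strictly increasing and, once it reaches $v$, stays above $v$. Hence it suffices to bound the first hitting time $\tau_K$, where $\tau_k:=\min\{t: z_t\ge c_k\}$. Fix a band with $\tau_k<\tau_{k+1}$ and write $n=\tau_{k+1}-\tau_k$. For each of the first $n-1$ steps of the band the iterate satisfies $z_t\ge c_k$ while $z_{t+1}<c_{k+1}$, so the increment is at least $m c_k^2$; telescoping these gives $z_{\tau_{k+1}-1}\ge c_k+(n-1)m c_k^2$. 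But $z_{\tau_{k+1}-1}<c_{k+1}=2c_k$ by definition of $\tau_{k+1}$, which forces $(n-1)m c_k^2<c_k$, i.e. $\tau_{k+1}-\tau_k\le \frac{1}{m c_k}+1=\frac{1}{m\,2^{k}z_0}+1$.

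Summing over $k=0,\dots,K-1$ and using $\sum_{k\ge0}2^{-k}=2$ then yields
\[
\tau_K=\sum_{k=0}^{K-1}(\tau_{k+1}-\tau_k)\le \frac{1}{m z_0}\sum_{k=0}^{K-1}2^{-k}+K\le \frac{2}{m z_0}+K,
\]
and since the two hypotheses force $m\le M$, this is bounded by $\frac{3}{m z_0}+\frac{8M}{m}\lceil \log_2(v/z_0)\rceil$, the claimed value of $t_0$. Notably, the upper recursion enters only through $m\le M$ to cast the bound in the cited form; the overshoot it controls is harmless for the time estimate because the final (possibly large) step of each band is excluded from the count. The main obstacle is precisely this discreteness bookkeeping: one must separate the ``last step'' of each band (which may overshoot $c_{k+1}$ arbitrarily and hence carries no useful increment bound) from the interior steps, verify that an overshoot spanning several bands only makes some differences $\tau_{k+1}-\tau_k$ vanish and never inflates the sum, and confirm that the geometric tail and the $+1$-per-band contributions combine into the two advertised terms.
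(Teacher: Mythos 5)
Your proof is correct, but note what you are being compared against: the paper itself never proves this statement --- it is quoted verbatim, with its constants, as an auxiliary black-box result (Lemma K.15 of \citealt{jelassi2022towards}), so the only in-paper ``proof'' is the citation. Your dyadic band argument is a clean, self-contained replacement, and it is in fact slightly stronger than the quoted bound: excluding the final (possibly overshooting) step of each band $[2^k z_0, 2^{k+1}z_0)$ and telescoping the interior increments $z_{t+1}-z_t \ge m(2^k z_0)^2$ gives $\tau_{k+1}-\tau_k \le 1/(m\,2^k z_0)+1$, hence a hitting time of at most $2/(m z_0)+\lceil \log_2(v/z_0)\rceil$, which is dominated by the stated $t_0$ precisely because the two recursions applied to the same positive sequence force $m z_t^2 \le z_{t+1}-z_t \le M z_t^2$ and hence $m\le M$, so $8M/m\ge 8$. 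Your bookkeeping is sound at the delicate points: monotonicity ($z_{t+1}>z_t$ since $z_t>0$, $m>0$) converts the first-hitting bound into the ``for all $t\ge t_0$'' conclusion; an overshoot spanning several bands only collapses some differences $\tau_{k+1}-\tau_k$ to zero and never inflates the sum; and the degenerate cases ($n\le 1$ steps in a band, $v=z_0$ giving $K=0$ with $\tau_0=0$) all pass. The one observation worth highlighting from your route is that the upper recursion plays no substantive role in the hitting-time estimate --- it enters only through $m\le M$ to recast your sharper bound $2/(mz_0)+K$ in the cited form $3/(mz_0)+(8M/m)K$ --- whereas the constants in the original lemma reflect a coarser accounting in the source; for the purposes of this paper (where the lemma is invoked in the proof of Theorem~\ref{lemma:stage1 spurous} to pin down $T_0$), your argument would serve as a drop-in, and marginally tighter, substitute.
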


We make the following assumptions for every $t\le T$ as the same in \citep{jelassi2022towards}.
\begin{lemma}[Induction hypothesis D.1, \citealt{jelassi2022towards}]\label{lemma:induction noise}
    Throughout the training process using GD for $t\le T$, we maintain that, for every $i\in S_1$ and $j\in[J]$, 
    \begin{align}
        |\la\wb_j^{(t)},\bxi_i\ra|\le\tilde{O}(\sigma_0\sigma\sqrt{d}).
    \end{align}
\end{lemma}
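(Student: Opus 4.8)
The plan is to establish this bound by induction on $t$, carried out jointly with the feature-correlation and loss-derivative controls used elsewhere in the analysis, since the noise estimate feeds back into the gradient through the terms $\ell_{i'}^{(t)}$. The starting point is to project the GD update onto a fixed noise patch $\bxi_i$. Using the explicit gradient of Lemma~\ref{lemma:grad},
\begin{align*}
\la\wb_j^{(t+1)},\bxi_i\ra = \la\wb_j^{(t)},\bxi_i\ra + \frac{3\eta}{N}\sum_{i'=1}^N \ell_{i'}^{(t)} y_{i'}\la\wb_j^{(t)},\bxi_{i'}\ra^2\la\bxi_{i'},\bxi_i\ra + (\text{feature cross-terms}),
\end{align*}
where the feature cross-terms carry factors $\la\vb_c,\bxi_i\ra$ and $\la\vb_s,\bxi_i\ra$. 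I would split the noise sum into the diagonal term $i'=i$, which contributes $\la\bxi_i,\bxi_i\ra=\|\bxi_i\|_2^2$, and the off-diagonal terms $i'\ne i$.

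Next I would record the deterministic consequences of the random draw that hold with high probability (Gaussian concentration plus a union bound over the $N$ points and $J$ filters): $\|\bxi_i\|_2^2=\Theta(\sigma_p^2)$; the features are nearly orthogonal to the noise, $|\la\vb_c,\bxi_i\ra|,|\la\vb_s,\bxi_i\ra|\le\tilde{O}(\sigma_p/\sqrt d)$; distinct patches are nearly orthogonal, $|\la\bxi_{i'},\bxi_i\ra|\le\tilde{O}(\sigma_p^2/\sqrt d)$ for $i'\ne i$; and at initialization $|\la\wb_j^{(0)},\bxi_i\ra|\le\tilde{O}(\sigma_0\sigma_p)$. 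The last fact settles the base case, since $\tilde{O}(\sigma_0\sigma_p)$ lies well below the target $\tilde{O}(\sigma_0\sigma_p\sqrt d)$ (recall $\sigma_0\sqrt d=\tilde\Theta(1)$, so the bound is a full $\sqrt d$ factor larger than the typical correlation).

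For the inductive step I would assume the bound for all $t'\le t$, giving $\la\wb_j^{(t)},\bxi_{i'}\ra^2\le\tilde{O}(\sigma_0^2\sigma_p^2 d)$, and bound each contribution to the increment using this together with $\la\wb_j^{(t)},\vb_c\ra=\tilde{O}(\sigma_0)$ (Lemma~\ref{lemma:core upper bounded}), $\la\wb_j^{(t)},\vb_s\ra\le\tilde{O}(1/\beta_s)$, and $0<\ell_{i'}^{(t)}<1$. The diagonal term produces a tensor-power recursion $u_{t+1}=u_t+\tilde\Theta(\eta\sigma_p^2/N)\,u_t^2$ for $u_t=\la\wb_j^{(t)},\bxi_i\ra$; because its growth coefficient $\tilde\Theta(\eta\sigma_p^2/N)$ is far smaller than the spurious coefficient $\tilde\Theta(\eta\beta_s^3)$ that fixes the horizon $T_0=\tilde\Theta(1/(\eta\beta_s^3\sigma_0))$, summing increments over $t\le T_0$ leaves $u_t$ within a constant factor of its initialization, hence comfortably inside the $\sqrt d$ slack. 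The off-diagonal and feature cross-terms are then accumulated over the $T_0$ steps and shown to be $o(\sigma_0\sigma_p\sqrt d)$. Although the lemma is stated for $i\in S_1$, only $\ell_{i'}^{(t)}<1$ and the data concentration are used, so the same argument is uniform in $i$.

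The main obstacle is controlling the off-diagonal accumulation $\frac{3\eta}{N}\sum_{i'\ne i}\ell_{i'}^{(t)}y_{i'}\la\wb_j^{(t)},\bxi_{i'}\ra^2\la\bxi_{i'},\bxi_i\ra$: a naive per-term bound scaled by $N$ terms and summed over $T_0$ steps is too lossy, so I would exploit the conditionally random signs of $\la\bxi_{i'},\bxi_i\ra$ to replace $N$ by $\sqrt N$ via a Bernstein/Hoeffding estimate, which suffices provided $N$ is polynomially large in $d$. A parallel delicacy is the spurious cross-term, whose coefficient $\sum_{i'\in S_1}\ell_{i'}^{(t)}-\sum_{i'\in S_2}\ell_{i'}^{(t)}$ may be of order $N$ and whose accumulation over $T_0$ is the tightest; here one leans on $|\la\vb_s,\bxi_i\ra|\le\tilde{O}(\sigma_p/\sqrt d)$ and $\sigma_0\sqrt d=\tilde\Theta(1)$ to absorb it. Since all three of the noise bound, the feature bounds, and the loss-derivative estimates appear in one another's recursions, the crux is to close this as a single simultaneous induction rather than proving the noise bound in isolation.
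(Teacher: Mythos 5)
The first thing to compare against: the paper does not prove this statement at all. It is imported as an \emph{assumption} --- the surrounding text reads ``We make the following assumptions for every $t\le T$ as the same in \citep{jelassi2022towards}'' --- i.e., it is Induction Hypothesis D.1 of that prior work, cited rather than re-derived for the present data model. So your proposal, which actually attempts a proof, is by construction a different and more self-contained route. The skeleton you chose is the right one and mirrors how such hypotheses are closed in this literature: a simultaneous induction with the feature and loss-derivative bounds, projection of the GD update (via Lemma~\ref{lemma:grad}) onto $\bxi_i$, a diagonal/off-diagonal split, high-probability Gaussian concentration events, and a tensor-power comparison over the horizon $T_0=\tilde\Theta\big(1/(\eta\beta_s^3\sigma_0)\big)$. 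Your remark that the restriction to $i\in S_1$ is inessential is also correct.

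There are, however, two substantive problems. First, you misread the target bound. In this paper $\sigma$ is the per-coordinate noise scale, $\sigma=\sigma_p/\sqrt d$ (the patches are $N(0,(\sigma_p^2/d)\Ib_d)$ with $\sigma_p$ an absolute constant); the proof of Lemma~\ref{lemma:large group derivative} uses $\tilde O(\sigma_0\sigma\sqrt d)=\tilde O(\sigma_0)$, which forces this reading. Hence the bound to be maintained is $\tilde O(\sigma_0)$ --- the initialization scale itself, with \emph{no} $\sqrt d$ slack --- not your $\tilde O(\sigma_0\sigma_p\sqrt d)$. Your diagonal analysis survives this tightening: the comparison of $B=\tilde\Theta(\sigma_p^2/N)=o(1)$ against $A=\tilde\Theta\big(\beta_s^3(2\hat\alpha-1)\big)=\Theta(1)$ via Lemma~\ref{lemma:tensor power update} keeps $\la\wb_j^{(t)},\bxi_i\ra=O(\la\wb_j^{(0)},\bxi_i\ra)$, which is exactly the tight claim. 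The off-diagonal term also survives, and in fact your Bernstein refinement is both unnecessary (the naive bound accumulates to $\tilde O\big(\sigma_0/(\beta_s^3\sqrt d)\big)$ over $T_0$, already $o(\sigma_0)$) and hard to justify as stated, since for $t>0$ the quantities $\la\wb_j^{(t)},\bxi_{i'}\ra$ and $\ell_{i'}^{(t)}$ are correlated with $\la\bxi_{i'},\bxi_i\ra$, so the signs are no longer conditionally random without a martingale construction. Second --- and this is the genuine gap relative to the correct target --- your treatment of the spurious cross-term does not close. Bounding it by its worst per-step value and multiplying by $T_0$, which is what ``lean on $|\la\vb_s,\bxi_i\ra|\le\tilde O(\sigma_p/\sqrt d)$ and $\sigma_0\sqrt d=\tilde\Theta(1)$'' amounts to, yields $\eta\beta_s^3\cdot\tilde O(\beta_s^{-2})\cdot\tilde O(1/\sqrt d)\cdot T_0=\tilde O\big(1/(\beta_s^2\sigma_0\sqrt d)\big)=\tilde O(1)$. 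That is compatible with your misread target but is a full $\sqrt d$ factor too large for $\tilde O(\sigma_0)$, and $\tilde O(\sigma_0)$ is what the downstream argument needs: Lemma~\ref{lemma:large group derivative} requires $e^{\pm|\la\wb_j,\bxi_i\ra|^3}=\Theta(1)$, which fails once the correlation is allowed to reach constant order. The fix is a one-line telescoping: since $x_t=\la\wb_j^{(t)},\vb_s\ra$ satisfies $x_{t+1}-x_t\ge\eta A x_t^2$, summing gives $\sum_{t<T_0}\eta x_t^2\le (x_{T_0}-x_0)/A=\tilde O(\beta_s^{-4})$, so the accumulated cross-term is $\tilde O\big(\beta_s^3\cdot\beta_s^{-4}\cdot\sigma_p/\sqrt d\big)=\tilde O(\sigma_0/\beta_s)$, as required. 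With that substitution (and the naive off-diagonal bound in place of Bernstein), your induction closes at the correct scale.
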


\begin{lemma}\label{lemma:large group derivative}
    For $i\in S_1$, we have $\ell_i^{(t)}=\Theta(1)g_1(t)$, where 
    \[g_1(t)=\text{sigmoid}\big(-\sum_{j\in[J]}(\beta_c^3 \la\wb_j^{(t)},\vb_c\ra^3+\beta_s^3 \la\wb_j^{(t)},\vb_s\ra^3)\big).\]
\end{lemma}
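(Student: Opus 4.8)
\textbf{Proof proposal for Lemma~\ref{lemma:large group derivative}.}

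The plan is to directly analyze the sigmoid expression for $\ell_i^{(t)}$ given in~\eqref{eq:data derivative} and show that, for $i\in S_1$, it differs from $g_1(t)$ only by a multiplicative $\Theta(1)$ factor. Recall from~\eqref{eq:data derivative} that
\begin{align*}
\ell_i^{(t)}=\text{sigmoid}\Big(\sum_{j\in[J]}\big(-\beta_c^3\la\wb_j^{(t)},\vb_c\ra^3-y_ia_i\beta_s^3\la\wb_j^{(t)},\vb_s\ra^3-y_i\la\wb_j^{(t)},\bxi_i\ra^3\big)\Big).
\end{align*}
First I would use the defining property of $S_1$, namely $a_i=y_i$, so that $y_ia_i=1$ and the spurious term inside the sigmoid becomes $-\beta_s^3\la\wb_j^{(t)},\vb_s\ra^3$, exactly matching the corresponding term in the argument of $g_1(t)$. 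This aligns the core and spurious contributions in $\ell_i^{(t)}$ with those in $g_1(t)$, leaving only the noise term $-y_i\la\wb_j^{(t)},\bxi_i\ra^3$ as the discrepancy between the two arguments.

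The main work is then to show that this leftover noise term is negligible and only perturbs the sigmoid by a constant factor. Here I would invoke the induction hypothesis in Lemma~\ref{lemma:induction noise}, which guarantees $|\la\wb_j^{(t)},\bxi_i\ra|\le\tilde{O}(\sigma_0\sigma\sqrt{d})$ for every $i\in S_1$ and $j\in[J]$ throughout training. Summing the cubes over the $J$ neurons gives $\big|\sum_{j\in[J]}y_i\la\wb_j^{(t)},\bxi_i\ra^3\big|\le J\cdot\tilde{O}(\sigma_0^3\sigma^3 d^{3/2})$, which is $\tilde{O}(1)$ (in fact $o(1)$) under the small-initialization scaling $\sigma_0^2=\text{polylog}(d)/d$. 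Denoting the argument of $g_1(t)$ by $-A_t$ and the extra noise contribution by $\delta_t$ with $|\delta_t|\le\tilde{O}(1)$, we have $\ell_i^{(t)}=\text{sigmoid}(-A_t+\delta_t)$ and $g_1(t)=\text{sigmoid}(-A_t)$.

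To convert the additive bound on $\delta_t$ into a multiplicative $\Theta(1)$ relation, I would use the elementary fact that shifting the argument of a sigmoid by a bounded amount changes its value by at most a bounded factor: concretely, $\text{sigmoid}(z+\delta)/\text{sigmoid}(z)\in[e^{-|\delta|},\,e^{|\delta|}]$ for all $z$, since $\text{sigmoid}(z+\delta)=1/(1+e^{-z}e^{-\delta})$ and $e^{-z}e^{-\delta}$ lies within a factor $e^{\pm|\delta|}$ of $e^{-z}$. With $|\delta_t|\le\tilde{O}(1)$ the ratio is bounded above and below by absolute constants, yielding $\ell_i^{(t)}=\Theta(1)\cdot g_1(t)$ as claimed. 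The main obstacle is ensuring the noise bound is applied consistently throughout the early phase $t\le T_0$ and that the accumulated noise across all $J$ neurons truly stays $\tilde{O}(1)$; this is precisely what Lemma~\ref{lemma:induction noise} supplies, so the argument reduces to plugging in that bound and applying the sigmoid-ratio estimate.
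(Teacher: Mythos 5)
Your proposal is correct and follows essentially the same route as the paper's own proof: use $a_i=y_i$ for $i\in S_1$ to match the spurious term with that of $g_1(t)$, invoke the noise bound from Lemma~\ref{lemma:induction noise}, and absorb the resulting $e^{\pm\tilde{O}(\sigma_0\sigma\sqrt{d})^3}$ shift of the sigmoid argument into a $\Theta(1)$ multiplicative factor. The only cosmetic difference is that you state the sigmoid-ratio inequality $\text{sigmoid}(z+\delta)/\text{sigmoid}(z)\in[e^{-|\delta|},e^{|\delta|}]$ explicitly (and correctly note the noise contribution is in fact $o(1)$, which is what makes the factor $\Theta(1)$), whereas the paper performs the same estimate by plugging the bound directly into the expression $1/(1+\exp(\cdot))$.
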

\begin{proof}
    Given $i\in S_1$, we have from \eqref{eq:data derivative} that
    \begin{align}
    \ell^{(t)}_i &= 
    \text{sigmoid}\bigg(\sum^J_{j=1}-\beta_c^3\la\wb_j,\vb_c\ra^3-\beta_s^3\la\wb_j,\vb_s\ra^3-y_i\la\wb_j,\bxi_i\ra^3\bigg)\nonumber\\
    &=1\bigg/\bigg(1+\exp\Big(\sum^J_{j=1}\beta_c^3\la\wb_j,\vb_c\ra^3+\beta_s^3\la\wb_j,\vb_s\ra^3+y_i\la\wb_j,\bxi_i\ra^3\Big)\bigg).\label{eq:aux l_i^t}
\end{align}
Recall induction hypothesis~\ref{lemma:induction noise}, we have the following for $i\in S_1$,
\begin{align}
    &|y_i\la\wb_j^{(t)},\bxi_i\ra|\le\tilde{O}(\sigma_0\sigma\sqrt{d})\nonumber\\
    &\iff -\tilde{O}(\sigma_0\sigma\sqrt{d})\le y_i\la\wb_j^{(t)},\bxi_i\ra \le\tilde{O}(\sigma_0\sigma\sqrt{d}),\label{eq:aux noise abs y_i bound}
\end{align}
where $|y_i|=1$. Plug \eqref{eq:aux noise abs y_i bound} back into \eqref{eq:aux l_i^t}, we get
\begin{align*}
    e^{-\Tilde{O}(\sigma_0\sigma\sqrt{d})^3}g_1(t) \le \ell_i^{(t)} \le e^{\Tilde{O}(\sigma_0\sigma\sqrt{d})^3}g_1(t).
\end{align*}
With our parameter setting, we have $\tilde{O}(\sigma_0\sigma\sqrt{d})=\tilde{O}(\sigma_0)=\tilde{O}(\text{polylog}(d)/d)$. Therefore, $e^{\pm\Tilde{O}(\sigma_0\sigma\sqrt{d})^3}=\Theta(1)$.
\end{proof}

\end{document}